\documentclass{article}

\PassOptionsToPackage{numbers, compress}{natbib}

\usepackage[final]{neurips_2024}

\usepackage[utf8]{inputenc} 
\usepackage[T1]{fontenc}    
\usepackage{hyperref}       
\usepackage{url}            
\usepackage{booktabs}       
\usepackage{amsfonts}       
\usepackage{nicefrac}       
\usepackage{microtype}      
\usepackage{xcolor}         
\bibliographystyle{abbrvnat}
\usepackage{floatrow}
\floatsetup[table]{capposition=top}
\newfloatcommand{capbtabbox}{table}[][\FBwidth]
\usepackage{tabularx}
\usepackage{caption}

\usepackage{microtype}
\usepackage{graphicx}
\usepackage{wrapfig}
\usepackage{subfigure}
\usepackage{subfiles}
\usepackage{color}
\usepackage{colortbl}
\usepackage{algorithm}
\usepackage{algorithmic}

\usepackage{amsmath}
\usepackage{amssymb}
\usepackage{mathtools}
\usepackage{amsthm}
\usepackage{enumerate}
\theoremstyle{plain}
\newtheorem{theorem}{Theorem}[section]

\newtheorem{lemma}[theorem]{Lemma}

\theoremstyle{definition}
\newtheorem{definition}[theorem]{Definition}
\newtheorem{assumption}[theorem]{Assumption}
\theoremstyle{remark}

\newcommand*{\affaddr}[1]{#1} 
\newcommand*{\affmark}[1][*]{\textsuperscript{#1}}

\begin{document}

\title{Enhancing Semi-Supervised Learning via Representative and Diverse Sample Selection}

\author{%
Qian Shao\affmark[1,3]\thanks{These authors contributed equally to this work.}\space\space, Jiangrui Kang\affmark[2]\footnotemark[1]\space\space, Qiyuan Chen\affmark[1,3]\footnotemark[1]\space\space, Zepeng Li\affmark[4], Hongxia Xu\affmark[1,3], \\ \textbf{Yiwen Cao\affmark[2], Jiajuan Liang\affmark[2]\thanks{Corresponding authors.}\space\space, and Jian Wu\affmark[1]\footnotemark[2]} \\
\affaddr{\affmark[1]College of Computer Science $\&$ Technology and Liangzhu Laboratory, Zhejiang University}\\
\affaddr{\affmark[2]BNU-HKBU United International College}~
\affaddr{\affmark[3]WeDoctor Cloud}\\
\affaddr{\affmark[4]The State Key Laboratory of Blockchain and Data Security, Zhejiang University}\\
\texttt{\{qianshao, qiyuanchen, lizepeng, einstein, wujian2000\}@zju.edu.cn}\\
\texttt{\{kangjiangrui, yiwencao, jiajuanliang\}@uic.edu.cn}
}

\maketitle

\begin{abstract}
Semi-Supervised Learning (SSL) has become a preferred paradigm in many deep learning tasks, which reduces the need for human labor. Previous studies primarily focus on effectively utilising the labelled and unlabeled data to improve performance. However, we observe that how to select samples for labelling also significantly impacts performance, particularly under extremely low-budget settings. The sample selection task in SSL has been under-explored for a long time. To fill in this gap, we propose a \underline{R}epresentative and \underline{D}iverse \underline{S}ample \underline{S}election approach (RDSS). By adopting a modified Frank-Wolfe algorithm to minimise a novel criterion $\alpha$-Maximum Mean Discrepancy ($\alpha$-MMD), RDSS samples a representative and diverse subset for annotation from the unlabeled data. We demonstrate that minimizing $\alpha$-MMD enhances the generalization ability of low-budget learning. Experimental results show that RDSS consistently improves the performance of several popular SSL frameworks and outperforms the state-of-the-art sample selection approaches used in Active Learning (AL) and Semi-Supervised Active Learning (SSAL), even with constrained annotation budgets. Our code is available at \href{https://github.com/YanhuiAILab/RDSS}{RDSS}.
\end{abstract}

\section{Introduction}
\label{intro}
Semi-Supervised Learning (SSL) is a popular paradigm which reduces reliance on large amounts of labeled data in many deep learning tasks~\citep{sohn2020fixmatch, schmutz2022don, yang2023shrinking}. Previous SSL research mainly focuses on effectively utilising labelled and unlabeled data. Specifically, labelled data directly supervise model learning, while unlabeled data help learn a desirable model that makes consistent and unambiguous predictions~\citep{wang2022freematch}. Besides, we also find that how to select samples for annotation will greatly affect model performance, particularly under extremely low-budget settings (see Section~\ref{cwodsm}).

The prevailing sample selection methods in SSL have many shortcomings. For example, random sampling may introduce imbalanced class distributions and inadequate coverage of the overall data distribution, resulting in poor performance. Stratified sampling randomly selects samples within each class, which is impractical in real-world scenarios where the label for each sample is unknown.
Existing researchers also employ representativeness and diversity strategies to select appropriate samples for annotation.
Representativeness~\citep{graf2007foundations} ensures that the selected subset distributes similarly with the entire dataset, and diversity~\citep{wu2023optimal} is designed to select informative samples by pushing away them in feature space. And focusing on only one aspect presents significant limitations (Figure~\ref{trade-off_rd}a and b).
To address these issues, \citet{xie2023active} and \citet{wang2022unsupervised} employ a combination of the two strategies for sample selection. These methods set a fixed ratio for representativeness and diversity, restricting the ultimate performance through our empirical evidence (see Section~\ref{analysis}). Fundamentally, they lack a theoretical basis to substantiate their effectiveness.

\begin{figure}[tbp]
    \centering
    \includegraphics[width=\textwidth]{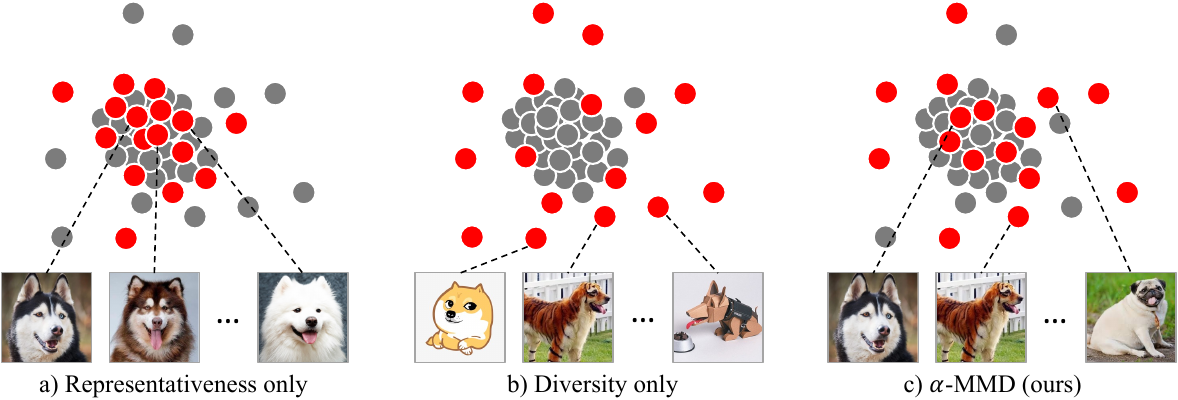}
    \caption{Visualization of selected samples from a dog dataset. The red and grey circles respectively symbolize the selected and unselected samples. \textbf{a)} The selected samples often contain an excessive number of highly similar instances, leading to redundancy; \textbf{b)} The selected samples contain too many edge points, unable to cover the entire dataset; \textbf{c)} The selected samples represent the entire dataset comprehensively and accurately.}
    \label{trade-off_rd}
\end{figure}

We observe that Active Learning (AL) primarily focuses on selecting the right samples for annotation, and numerous studies transfer the sample selection methods of AL into SSL, giving rise to Semi-Supervised Active Learning (SSAL)~\citep{wang2022debiased}.
However, most of these approaches exhibit several limitations:
(1) They require randomly selected samples to begin with, which expends a portion of the labelling budget, making it difficult to work effectively with a very limited budget (\textit{e.g.}, 1\% or even lower)~\citep{chan2021marginal};
(2) They involve human annotators in iterative cycles of labelling and training, leading to substantial labelling overhead~\citep{xie2023active};
(3) They are coupled with the model training so that samples for annotation need to be re-selected every time a model is trained~\citep{wang2022unsupervised}.
In summary, selecting the appropriate samples for annotation is challenging in SSL.

To address these challenges, we propose a Representative and Diverse Sample Selection approach (RDSS) that requests annotations only once and operates independently of the downstream tasks. Specifically, inspired by the concept of Maximum Mean Discrepancy (MMD)~\citep{gretton2006kernel}, we design a novel criterion named $\alpha$-MMD. It aims to strike a balance between representativeness and diversity via a trade-off parameter $\alpha$ (Figure~\ref{trade-off_rd}c), for which we find an optimal interval adapt to different budgets. By using a modified Frank-Wolfe algorithm called Generalized Kernel Herding without Replacement (GKHR), we can get an efficient approximate solution to this minimization problem.

We prove that under certain Reproducing Kernel Hilbert Space (RKHS) assumptions, $\alpha$-MMD effectively bounds the difference between training with a constrained versus an unlimited labelling budget. This implies that our proposed method could significantly enhance the generalization ability of learning with limited labels. We also give a theoretical assessment of GKHR with some supplementary numerical experiments, showing that GKHR performs well in learning with limited labels.

Furthermore, we evaluate our proposed RDSS across several popular SSL frameworks on the datasets CIFAR-10/100~\citep{krizhevsky2009learning}, SVHN~\citep{netzer2011reading}, STL-10~\citep{coates2011analysis} and ImageNet~\citep{deng2009imagenet}. Extensive experiments show that RDSS outperforms other sample selection methods widely used in SSL, AL or SSAL, especially with a constrained annotation budget.
Besides, ablation experimental results demonstrate that RDSS outperforms methods using a fixed ratio.

The main contributions of this article are as follows:
\begin{itemize}
\item We propose RDSS, which selects representative and diverse samples for annotation to enhance SSL by minimizing a novel criterion $\alpha$-MMD. Under low-budget settings, we develop a fast and efficient algorithm, GKHR, for optimization.
\item We prove that our method benefits the generalizability of the trained model under certain assumptions and rigorously establish an optimal interval for the trade-off parameter $\alpha$ adapt to the different budgets.
\item We compare RDSS with sample selection strategies widely used in SSL, AL or SSAL, the results of which demonstrate superior sample efficiency compared to these strategies. In addition, we conduct ablation experiments to verify our method's superiority over the fixed-ratio approach. 
\end{itemize}

\section{Related Work}
\paragraph{Semi-Supervised Learning}
Semi-Supervised Learning (SSL) effectively utilizes sparse labeled data and abundant unlabeled data for model training. Consistency Regularization~\citep{sajjadi2016regularization, laine2016temporal, tarvainen2017mean}, Pseudo-Labeling~\citep{lee2013pseudo, xie2020self} and their hybrid strategies~\citep{sohn2020fixmatch, zhang2021flexmatch, schmutz2022don} are commonly used in SSL.
Consistency Regularization ensures the model's output stays stable even when there's noise or small changes in the input, usually from the data augmentation~\citep{xie2020unsupervised}.
Pseudo-labelling integrates high-confidence data pseudo-labels directly into training, adhering to entropy minimization~\citep{li2023instant}.
Moreover, an integrative approach that combines the aforementioned strategies can also achieve substantial results~\citep{wang2022freematch, yang2023shrinking}.
Even though these approaches have been proven effective, they usually assume that labelled samples are randomly selected from each class (\textit{i.e.,} stratified sampling), which is not practical in real-world scenarios where the label for each sample is unknown.

\paragraph{Active Learning}
Active learning (AL) aims to optimize the learning process by selecting the appropriate samples for labelling, reducing reliance on large labelled datasets. There are two different criteria for sample selection: uncertainty and representativeness.
Uncertainty sampling selects samples about which the current model is most uncertain. Earlier studies utilized posterior probability~\citep{lewis1994heterogeneous, wang2016cost}, entropy~\citep{joshi2009multi, luo2013latent}, and classification margin~\citep{tong2001support} to estimate uncertainty. Recent research regards uncertainty as training loss~\citep{huang2021semi, yoo2019learning}, influence on model performance~\citep{freytag2014selecting, liu2021influence} or the prediction discrepancies between multiple classifiers~\citep{cho2022mcdal}.
However, uncertainty sampling methods may exhibit performance disparities across different models, leading researchers to focus on representativeness sampling, which aims to align the distribution of selected subset with that of the entire dataset~\citep{sener2018active, sinha2019variational, mahmood2021low}. Most AL approaches are difficult to perform well under extremely low-label settings. This may be because they usually require randomly selected samples to begin with and involve human annotators in iterative cycles of labelling and training, leading to substantial labelling overhead.

\paragraph{Model-Free Subsampling}
Subsampling is a statistical approach which selects a subset with size $m$ as a surrogate for the full dataset with size $n\gg m$. While model-based subsampling methods depend heavily on the model assumptions~\citep{ai2021optimal,yu2022optimal}, improper choice of the model could lead to bad performance of estimation and prediction. In that case, model-free subsampling is preferred in data-driven modelling tasks, as it does not depend on the model assumptions.
There are mainly two kinds of popular model-free subsampling methods.
The one is induced by minimizing statistical discrepancies, which forces the distribution of subset to be similar to that of full data, in other words, selects representative subsamples, such as Wasserstein distance~\citep{graf2007foundations}, energy distance~\citep{mak2018support}, uniform design~\citep{zhang2023optimal}, maximum mean discrepancy~\citep{chen2012super} and generalized empirical $F$-discrepancy~\citep{zhang2023model}.
The other tends to select a diverse subset containing as many informative samples as possible~\citep{wu2023optimal}. The above-mentioned methodologies either exclusively focus on representativeness or diversity, which are difficult to effectively apply to SSL.

\section{Problem Setup}
\label{setup}
Let $\mathcal{X}$ be the unlabeled data space, $\mathcal{Y}$ be the label space, $\mathbf{X}_n=\{\mathbf{x}_i\}_{i\in[n]}\subset\mathcal{X}$ be the full unlabeled dataset and $\mathcal{I}_m=\{i_1,i_2,\cdots,i_m\}\subset[n](m<n)$ be an index set contained in $[n]$, our goal is to find an index set $\mathcal{I}^*_m=\{i^*_1,i^*_2,\cdots,i^*_m\}\subset[n](m<n)$ such that the selected set of samples $\mathbf{X}_{\mathcal{I}^*_m}=\{\mathbf{x}_{i^*_1},\mathbf{x}_{i^*_2},\cdots,\mathbf{x}_{i^*_m}\}$ is the most informative. After that, we can get access to the true labels of selected samples and use the set of labelled data $S=\{(\mathbf{x}_i,y_i)\}_{i\in\mathcal{I}^*_m}$ and the rest of the unlabeled data to train a deep learning model.

Following the methodology of previous works, we use representativeness and diversity as criteria for evaluating the informativeness of selected samples. Representativeness ensures the selected samples distribute similarly to the full unlabeled dataset. Diversity is proposed to prevent an excessive concentration of selected samples in high-density areas of the full unlabeled dataset. Furthermore, the cluster assumption in SSL suggests that the data tend to form discrete clusters, in which boundary points are likely to be located in the low-density area. Therefore, under this assumption, selected samples with diversity contain more boundary points than the non-diversified ones, which is desired in training classifiers.

As a result, our goal can be formulated by solving the following problem:
\begin{equation}
        \label{RDSS}
        \max_{\mathcal{I}_m\subset[n]}\text{Rep}(\mathbf{X}_{\mathcal{I}_m},\mathbf{X}_n)+\lambda\text{Div}(\mathbf{X}_{\mathcal{I}_m},\mathbf{X}_n),
\end{equation}
where $\text{Rep}(\mathbf{X}_{\mathcal{I}_m},\mathbf{X}_n)$ and $\text{Div}(\mathbf{X}_{\mathcal{I}_m},\mathbf{X}_n)$ quantify the representativeness and diversity of selected samples respectively and $\lambda$ is a hyperparameter to balance the trade-off representativeness and diversity.

Besides, we propose another two fundamental settings which are beneficial to the implementation of the framework:
\textbf{(1) Low-budget learning.} The budget for many of the real-world tasks which require sample selection procedures is relatively low compared to the size of unlabeled data. Therefore, we set $m/n\le 0.2$ in default in the following context, including the analysis of the sampling algorithm and the experiments;
\textbf{(2) Sampling without Replacement.} Compared with the setting of sampling with replacement, sampling without replacement offers several benefits which better match our tasks, including bias and variance reduction, precision increase and representativeness enhancement~\citep{lohr2021sampling,thompson2012sampling}.

\section{Representative and Diversity Sample Selection}
\label{framework}
The \underline{R}epresentative and \underline{D}iverse \underline{S}ample \underline{S}election (RDSS) framework consists of two steps: (1) \emph{Quantification.} We quantify the representativeness and diversity of selected samples by a novel concept called $\alpha$-MMD (\ref{AMMD}), where $\lambda$ is replaced by $\alpha$ as the trade-off hyperparameter; (2) \emph{Optimization.} We optimize $\alpha$-MMD by GKHR algorithm to obtain the optimally selected samples $\mathbf{X}_{\mathcal{I}^*_m}$.

\subsection{Quantification of Diversity and Representativeness}
\label{qdr}
In classical statistics and machine learning problems, the inner product of data points $\mathbf{x},\mathbf{y}\in\mathcal{X}$, defined by $\langle\mathbf{x},\mathbf{y}\rangle$, is employed to as a similarity measure between $\mathbf{x},\mathbf{y}$. However, the application of linear functions can be very restrictive in real-world problems. In contrast, kernel methods use kernel functions $k(\mathbf{x},\mathbf{y})$, including Gaussian kernels (RBF), Laplacian kernels and polynomial kernels, as non-linear similarity measures between $\mathbf{x},\mathbf{y}$, which are actually inner products of the projections of $k(\mathbf{x},\mathbf{y})$ in some high-dimensional feature space~\citep{muandet2017kernel}.

Let $k(\cdot,\cdot)$ be a kernel function on $\mathcal{X}\times\mathcal{X}$, and we employ $k(\cdot,\cdot)$ to measure the similarity between any two points and the average similarity, denoted by
\begin{equation}
    \label{SIM}
    S_k(\mathbf{X}_{\mathcal{I}_m})=\frac{1}{m^2} \sum_{i\in\mathcal{I}_m} \sum_{j\in\mathcal{I}_m} k\left(\mathbf{x}_i, \mathbf{x}_j\right),
\end{equation}
to measure the similarity between the selected samples. Obviously, $S(\mathbf{X}_{\mathcal{I}_m})$ can evaluate the diversity of $\mathbf{X}_{\mathcal{I}_m}$ since larger similarity implies smaller diversity.

As a statistical discrepancy which measures the distance between distributions, the maximum mean discrepancy (MMD) is introduced here to quantify the representativeness of $\mathbf{X}_{\mathcal{I}_m}$ to $\mathbf{X}_n$. Proposed by \citet{gretton2006kernel}, MMD is formally defined below:
\begin{definition}[\textbf{Maximum Mean Discrepancy}]
    \label{def1}
    Let $P,Q$ be two Borel probability measures on $\mathcal{X}$. Suppose $f$ is sampled from the unit ball in a reproducing kernel Hilbert space (RKHS) $\mathcal{H}$ associated with its reproducing kernel $k(\cdot,\cdot)$, \textit{i.e.,} $\Vert f\Vert_{\mathcal{H}}\le1$, then the MMD between $P$ and $Q$ is defined by
    \begin{equation}
        \label{mmd}
        \operatorname{MMD}_k^2(P, Q): = \sup _{\Vert f\Vert_{\mathcal{H}}\le1}\left(\int fdP-\int fdQ\right)^2 = \mathbb{E}\left[k\left(X, X'\right)+k\left(Y,Y'\right)-2k(X, Y)\right],
    \end{equation}
    where $X,X'\sim P$ and $Y,Y'\sim Q$ are independent copies.
\end{definition}
We can next derive the empirical version for MMD that is able to measure the representativeness of $\mathbf{X}_{\mathcal{I}_m}=\{\mathbf{x}_i\}_{i\in\mathcal{I}_m}$ relative to $\mathbf{X}_n=\{\mathbf{x}_i\}_{i=1}^n$ by replacing $P,Q$ with the empirical distribution constructed by $\mathbf{X}_{\mathcal{I}_m},\mathbf{X}_n$ in (\ref{mmd}):
\begin{equation}
    \label{mmde}
    \operatorname{MMD}_k^2(\mathbf{X}_{\mathcal{I}_m},\mathbf{X}_n):= \frac{1}{n^2} \sum_{i=1}^n \sum_{j=1}^n k\left(\mathbf{x}_i, \mathbf{x}_j\right) + \frac{1}{m^2} \sum_{i\in\mathcal{I}_m} \sum_{j\in\mathcal{I}_m} k\left(\mathbf{x}_i, \mathbf{x}_j\right) -\frac{2}{m n} \sum_{i=1}^n \sum_{j\in\mathcal{I}_m} k\left(\mathbf{x}_i, \mathbf{x}_j\right).
\end{equation}

\textbf{Optimization objective.} Set $\text{Rep}(\cdot,\cdot)=-\operatorname{MMD}_k^2(\cdot,\cdot)$ and $\text{Div}(\cdot)=-S_k(\cdot)$ in (\ref{RDSS}), where $k$ is a proper kernel function, our optimization objective becomes
\begin{equation}
    \label{RDSS-MMD}
    \min_{\mathcal{I}_m\subset[n]}\operatorname{MMD}_k^2(\mathbf{X}_{\mathcal{I}_m},\mathbf{X}_n)+\lambda S_k(\mathbf{X}_{\mathcal{I}_m}).
\end{equation}
Set $\lambda=\frac{1-\alpha}{\alpha m}$, since $\sum_{i=1}^n \sum_{j=1}^n k\left(\mathbf{x}_i, \mathbf{x}_j\right)$ is a constant, the objective function in (\ref{RDSS-MMD}) can be rewritten by
\begin{equation}
    \begin{aligned}
     &\alpha\operatorname{MMD}_k^2(\mathbf{X}_{\mathcal{I}_m},\mathbf{X}_n)+\frac{1-\alpha}{m} S_k(\mathbf{X}_{\mathcal{I}_m})+\frac{\alpha(\alpha-1)}{n^2} \sum_{i=1}^n \sum_{j=1}^n k\left(\mathbf{x}_i, \mathbf{x}_j\right)\\
        =&\frac{\alpha^2}{n^2} \sum_{i=1}^n \sum_{j=1}^n k\left(\mathbf{x}_i, \mathbf{x}_j\right) + \frac{1}{m^2} \sum_{i\in\mathcal{I}_m} \sum_{j\in\mathcal{I}_m} k\left(\mathbf{x}_i, \mathbf{x}_j\right) -\frac{2\alpha}{m n} \sum_{i=1}^n \sum_{j\in\mathcal{I}_m} k\left(\mathbf{x}_i, \mathbf{x}_j\right)\\
        =&\sup _{\Vert f\Vert_{\mathcal{H}}\le1}\left(\frac{1}{m}\sum_{i\in\mathcal{I}_m}f(\mathbf{x}_i)-\frac{\alpha}{n}\sum_{j=1}^nf(\mathbf{x}_j)\right)^2
    \end{aligned}
    \label{AMMD}
\end{equation}
which defines a new concept called $\alpha$-MMD, denoted by $\operatorname{MMD}_{k,\alpha}(\mathbf{X}_{\mathcal{I}_m},\mathbf{X}_n)$. This new concept distinguishes our method from those existing methods, which is essential for developing the sampling algorithms and theoretical analysis. Note that $\alpha$-MMD degenerates to classical MMD when $\alpha=1$ and degenerates to average similarity when $\alpha=0$. As $\alpha$ decreases, $\lambda$ increases, thereby encouraging the diversity for sample selection.

\textbf{Remark 1.} In the following context, all the kernels are assumed to be characteristic and positive definite if not specified. The following illustrates the advantages of the two properties.

\textbf{Characteristics kernels.} The MMD is generally a pseudo-metric on the space of all Borel probability distributions, implying that the MMD between two different distributions can be zero. Nevertheless, MMD becomes a proper metric when $k$ is a characteristic kernel, \textit{i.e.,} $P\rightarrow \int_{\mathcal{X}}k(\cdot,\mathbf{x})dP$ for any Borel probability distribution $P$ on $\mathcal{X}$~\citep{muandet2017kernel}. Therefore, MMD induced by characteristic kernels can be more appropriate for measuring representativeness.

\textbf{Positive definite kernels.} Aronszajn~\citep{aronszajn1950theory} showed that for every positive definite kernel $k(\cdot,\cdot)$, \textit{i.e.,} its Gram matrix is always positive definite and symmetric, it uniquely determines an RKHS $\mathcal{H}$ and vice versa. This property is not only important for evaluating the property of MMD~\citep{sriperumbudur2012empirical} but also required in optimizing MMD~\citep{pronzato2021performance} by Frank-Wolfe algorithm.

\subsection{Sampling Algorithm}
\label{algorithm}
In the previous research~\citep{sener2018active,mahmood2021low,wang2022unsupervised,shao2024comprehensive,xu2024predictive}, sample selection is usually modelled by a non-convex combinatorial optimization problem. In contrast, following the idea of \citep{bach2012equivalence}, we regard $\min_{\mathcal{I}_m\in[n]}\operatorname{MMD}^2_{k,\alpha}(\mathbf{X}_{\mathcal{I}_m},\mathbf{X}_n)$ as a convex optimization problem by exploiting the convexity of $\alpha$-MMD, and then solve it by a fast iterative minimization procedure derived from Frank-Wolfe algorithm (see Appendix~\ref{secA} for derivation details):
\begin{equation}
    \label{GKH-I}
    \mathbf{x}_{i^*_{p+1}}\in \mathop{\arg\min}_{i\in[n]}f_{\mathcal{I}^*_p}(\mathbf{x}_i),\mathcal{I}^*_{p+1}\leftarrow\mathcal{I}^*_p\cup\{{i^*_{p+1}}\},\mathcal{I}_0=\emptyset,
\end{equation}
where $f_{\mathcal{I}_p}(\mathbf{x}_i)=\sum_{j\in\mathcal{I}_p} k\left(\mathbf{x}_i, \mathbf{x}_j\right)-\alpha p\sum_{l=1}^n k(\mathbf{x}_i,\mathbf{x}_l)/n$. As an extension of kernel herding~\citep{chen2012super}, its corresponding algorithm (see Algorithm \ref{OSR}) is called \underline{G}eneralized \underline{K}ernel \underline{H}erding (GKH). Note that $f_{\mathcal{I}_p}(\mathbf{x}_i)$ is iteratively updated in Algorithm \ref{OSR}, which can save a lot of running time. However, GKH can select repeated samples that contradict the setting of sampling without replacement. To address this issue, we propose a modified iterating formula based on (\ref{GKH-I}):
\begin{equation}
    \label{GKHR-I}
    \mathbf{x}_{i^*_{p+1}}\in \mathop{\arg\min}_{i\in[n]\backslash\mathcal{I}^*_p}f_{\mathcal{I}^*_p}(\mathbf{x}_i),\mathcal{I}^*_{p+1}\leftarrow\mathcal{I}^*_p\cup\{{i^*_{p+1}}\},\mathcal{I}^*_0=\emptyset,
\end{equation}
which admits no repetitiveness in the selected samples. Its corresponding algorithm (see Algorithm \ref{OS}) is thereby named as \underline{G}eneralized \underline{K}ernel \underline{H}erding without \underline{R}eplacement (GKHR), employed as the sampling algorithm for RDSS.

\begin{algorithm}[tbp]
    \caption{Generalized Kernel Herding without Replacement}
    \label{OS}
    \begin{algorithmic}[1]
        \REQUIRE
        Data set $\mathbf{X}_n=\{\mathbf{x}_1,\cdots,\mathbf{x}_{n}\}\subset\mathcal{X}$; the number of selected samples $m<n$; a positive definite, characteristic and radial kernel $k(\cdot,\cdot)$ on $\mathcal{X}\times\mathcal{X}$; trade-off parameter $\alpha\le1$.
        \ENSURE
        Selected samples $\mathbf{X}_{\mathcal{I}^*_m}=\{\mathbf{x}_{i^*_1},\cdots,\mathbf{x}_{i^*_m}\}$.
        \STATE For each $\mathbf{x}_i\in\mathbf{X}_n$ calculate $\mu(\mathbf{x}_i):=\sum_{j=1}^n k(\mathbf{x}_j,\mathbf{x}_i)/n$.
        \STATE Set $\beta_1=1$, $S_0=0$, $\mathcal{I}=\emptyset$.
        \FOR{$p\in\{1,\cdots,m\}$}
        \STATE ${i^*_p} \in \mathop{\arg\min}_{i\in[n]\backslash\mathcal{I}^*_p} S_{p-1}(\mathbf{x}_i)-\alpha \mu(\mathbf{x}_i)$
        \STATE For all $i\in[n]\backslash\mathcal{I}^*_p$, update $S_p(\mathbf{x}_i)=(1-\beta_p)S_{p-1}(\mathbf{x}_i)+\beta_pk(\mathbf{x}_{i^*_p},\mathbf{x}_i)$
        \STATE ${\mathcal{I}^*_{p+1}}\leftarrow {\mathcal{I}^*_p}\cup\{{i^*_p}\}$, $p\leftarrow p+1$, set $\beta_p=1/p$.
        \ENDFOR
    \end{algorithmic}
\end{algorithm}

\textbf{Computational complexity.} Despite the time cost for calculating kernel functions, the computational complexity of GKHR is $O(mn)$, since in each iteration, the steps in lines 4 and 5 of Algorithm~\ref{OSR} respectively require $O(n)$ computations. Note that GKH has the same order of computational complexity as GKHR.

\section{Theoretical Analysis}
\label{ta}
\subsection{Generalization Bounds}
\label{gb}
Recall the core-set approach in \citep{sener2018active}, \textit{i.e.,} for any $h\in\mathcal{H}$,
\[R(h)\le\widehat{R}_S(h)+|R(h)-\widehat{R}_T(h)|+|\widehat{R}_T(h)-\widehat{R}_S(h)|,\]
where $T$ is the full labeled dataset and $S\subset T$ is the core set, $R(h)$ is the expected risk of $h$, $\widehat{R}_T(h),\widehat{R}_S(h)$ are empirical risk of $h$ on $T,S$. The first term $\widehat{R}_S(h)$ is unknown before we label the selected samples, and the second term $|R(h)-\widehat{R}_T(h)|$ can be upper bounded by the so-called generalization bounds~\citep{bach2021learning,zhang2020concentration} which do not depend on the choice of core set. Therefore, to control the upper bound of $R(h)$, we only need to analyse the upper bound of the third term $|\widehat{R}_T(h)-\widehat{R}_S(h)|$ called core-set loss, which requires several mild assumptions. Shalit, et al.~\citep{shalit2017estimating} derived a MMD-type upper bound for $|\widehat{R}_T(h)-\widehat{R}_S(h)|$ to estimate individual treatment effect, while our bound is generalized to a wider range of tasks.

Let $\mathcal{H}_1=\{h|h:\mathcal{X}\rightarrow\mathcal{Y}\}$ be a hypothesis set in which we are going to select a predictor and suppose that the labelled data $T=\{(\mathbf{x}_i,y_i)\}_{i=1}^n$ are i.i.d. sampled from a random vector $(X,Y)$ defined on $\mathcal{X}\times\mathcal{Y}$. We firstly assume that $\mathcal{H}_1$ is an RKHS, which is mild in machine learning theory~\citep{bach2021learning,bietti2019group}.

\begin{assumption}
    \label{as1}
    $\mathcal{H}_1$ is an RKHS associated with bounded positive definite kernel $k_1$ where the norm of any $h\in\mathcal{H}_1$ is bounded by $K_h$.
\end{assumption}

We further make RKHS assumptions on the functional space of $\mathbb{E}(Y|X)$ and $\operatorname{Var}(Y|X)$ that are fundamental in the field of conditional distribution embedding~\citep{song2009hilbert,sriperumbudur2012empirical}.

\begin{assumption}
    \label{as2}  
    There is an RKHS $\mathcal{H}_2$ associated with bounded positive definite kernel $k_2$ such that $\mathbb{E}(Y|X)\in\mathcal{H}_2$ and the norm of any $\mathbb{E}(Y|X)$ is bounded by $K_m$.
\end{assumption}
\begin{assumption}
    \label{as3}  
    There is an RKHS $\mathcal{H}_3$ associated with bounded positive definite kernel $k_3$ such that $\operatorname{Var}(Y|X)\in\mathcal{H}_3$ and the norm of any $\operatorname{Var}(Y|X)$ is bounded by $K_s$.
\end{assumption}

We next give a $\alpha$-MMD-type upper bound for the core-set loss by the following theorem:
\begin{theorem}
\label{thm4}
    Take $k=k_1^2+k_1k_2+k_3$, then under assumptions 1-3, for any selected samples $S\subset T$, there exists a positive constant $K_c$ such that the following inequality holds:
    \[|\widehat{R}_T(h)-\widehat{R}_S(h)|\le K_c(\operatorname{MMD}_{k,\alpha}(\mathbf{X}_S,\mathbf{X}_T)+(1-\alpha)\sqrt{K})^2,\]
    where $0\le\alpha\le 1$, $0\le \max_{\mathbf{x}\in\mathcal{X}}k(\mathbf{x},\mathbf{x})=K$ and $\mathbf{X}_S,\mathbf{X}_T$ are projections of $S,T$ on $\mathcal{X}$.
\end{theorem}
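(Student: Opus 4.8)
The plan is to reduce the core-set loss to a discrepancy between empirical averages of one fixed RKHS function, and then invoke the mean-embedding duality for $\alpha$-MMD recorded in (\ref{AMMD}). Since Assumptions~\ref{as2}--\ref{as3} are exactly the ingredients of the bias--variance decomposition of the squared-loss risk, it suffices to take $\ell$ to be the squared loss and to reason through the pointwise risk $g_h(\mathbf x):=\mathbb E\bigl[(h(\mathbf x)-Y)^2\mid X=\mathbf x\bigr]=\bigl(h(\mathbf x)-\mathbb E[Y\mid X=\mathbf x]\bigr)^2+\operatorname{Var}(Y\mid X=\mathbf x)$; then $|\widehat R_T(h)-\widehat R_S(h)|=\bigl|\tfrac1m\sum_{i\in S}g_h(\mathbf x_i)-\tfrac1n\sum_{j=1}^n g_h(\mathbf x_j)\bigr|$ (the residual centred terms, if $\widehat R$ uses empirical labels, are mean zero and fold into the generalization bound of Section~\ref{gb}), a quantity in $\mathbf X_S,\mathbf X_T$ alone.

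The heart of the proof---and the step I expect to be the main obstacle---is the structural lemma that $g_h\in\mathcal H_k$ with $k=k_1^2+k_1k_2+k_3$ and $\|g_h\|_{\mathcal H_k}\le C$ for a constant $C=C(K_h,K_m,K_s)$. I would base it on the RKHS product/sum calculus: $a\in\mathcal H_{k_a}$ and $b\in\mathcal H_{k_b}$ imply $ab\in\mathcal H_{k_ak_b}$ with $\|ab\|_{\mathcal H_{k_ak_b}}\le\|a\|_{\mathcal H_{k_a}}\|b\|_{\mathcal H_{k_b}}$, while a sum of functions from $\mathcal H_{k_a}$ and $\mathcal H_{k_b}$ lies in $\mathcal H_{k_a+k_b}$ with norm bounded through the pieces. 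Expanding $g_h=h\cdot h-2\,h\cdot\mathbb E[Y\mid X]+\mathbb E[Y^2\mid X]$: by Assumption~\ref{as1}, $h\cdot h\in\mathcal H_{k_1^2}$ with norm $\le K_h^2$; by Assumptions~\ref{as1}--\ref{as2}, $h\cdot\mathbb E[Y\mid X]\in\mathcal H_{k_1k_2}$ with norm $\le K_hK_m$; and by Assumption~\ref{as3} the remaining conditional-second-moment term sits in $\mathcal H_{k_3}$. The delicate bookkeeping is to check that every product and square kernel appearing is a summand of (or dominated by) $k$, and that the norm bound is driven only by the three assumed constants $K_h,K_m,K_s$---this is precisely what pins down the particular kernel $k=k_1^2+k_1k_2+k_3$.

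Granting the lemma, the conclusion follows quickly. Set $\mu_{\mathbf X_S}=\tfrac1m\sum_{i\in S}k(\cdot,\mathbf x_i)$ and $\mu_{\mathbf X_T}=\tfrac1n\sum_{j=1}^n k(\cdot,\mathbf x_j)$ in $\mathcal H_k$; by the reproducing property, the core-set loss equals $\bigl|\langle g_h,\,\mu_{\mathbf X_S}-\mu_{\mathbf X_T}\rangle_{\mathcal H_k}\bigr|$. Writing $\mu_{\mathbf X_S}-\mu_{\mathbf X_T}=(\mu_{\mathbf X_S}-\alpha\mu_{\mathbf X_T})-(1-\alpha)\mu_{\mathbf X_T}$ and applying Cauchy--Schwarz, I would use (i) $\|\mu_{\mathbf X_S}-\alpha\mu_{\mathbf X_T}\|_{\mathcal H_k}=\operatorname{MMD}_{k,\alpha}(\mathbf X_S,\mathbf X_T)$, which is exactly the identity in (\ref{AMMD}), and (ii) $\|\mu_{\mathbf X_T}\|_{\mathcal H_k}^2=\tfrac1{n^2}\sum_{i,j}k(\mathbf x_i,\mathbf x_j)\le K$, since $k(\mathbf x,\mathbf x')\le\sqrt{k(\mathbf x,\mathbf x)\,k(\mathbf x',\mathbf x')}\le K$. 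These give $|\widehat R_T(h)-\widehat R_S(h)|\le\|g_h\|_{\mathcal H_k}\bigl(\operatorname{MMD}_{k,\alpha}(\mathbf X_S,\mathbf X_T)+(1-\alpha)\sqrt K\bigr)$, and choosing $K_c$ to absorb $\|g_h\|_{\mathcal H_k}\le C$ yields the claimed estimate, with the square on the right-hand side matching the squared-discrepancy normalization used throughout (cf.~(\ref{AMMD})).
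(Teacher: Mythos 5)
Your overall strategy is the same as the paper's: write the pointwise squared-loss risk $g_h(\mathbf{x})=\mathbb{E}[(h(\mathbf{x})-Y)^2\mid X=\mathbf{x}]$ as a single element of the RKHS with kernel $k=k_1^2+k_1k_2+k_3$ via the product/sum kernel calculus, bound its norm by a constant depending only on $K_h,K_m,K_s$, dualize the core-set loss against the mean embeddings to obtain $\operatorname{MMD}_k(\mathbf{X}_S,\mathbf{X}_T)$, and then pass from $\operatorname{MMD}_k$ to $\operatorname{MMD}_{k,\alpha}$ at a cost of $(1-\alpha)\sqrt{K}$; your decomposition $\mu_{\mathbf{X}_S}-\mu_{\mathbf{X}_T}=(\mu_{\mathbf{X}_S}-\alpha\mu_{\mathbf{X}_T})-(1-\alpha)\mu_{\mathbf{X}_T}$ is exactly the content of Lemma~\ref{lemma2}.

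There is, however, a gap at the step you yourself flag as the delicate one. You expand $g_h=h\cdot h-2\,h\cdot\mathbb{E}[Y\mid X]+\mathbb{E}[Y^2\mid X]$ and place the last summand in $\mathcal{H}_{k_3}$ ``by Assumption~\ref{as3}''. But Assumption~\ref{as3} only puts $\operatorname{Var}(Y\mid X)$ in $\mathcal{H}_3$; since $\mathbb{E}[Y^2\mid X]=\operatorname{Var}(Y\mid X)+(\mathbb{E}[Y\mid X])^2$, you are left with the term $(\mathbb{E}[Y\mid X])^2$, which by your own product rule lives in $\mathcal{H}_{k_2^2}$ --- and $k_2^2$ is not a summand of $k=k_1^2+k_1k_2+k_3$. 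To close this you would need either to enlarge the kernel to $k_1^2+k_1k_2+k_2^2+k_3$ (the norm constant then picks up an extra $K_m^2$) or an additional assumption absorbing $(\mathbb{E}[Y\mid X])^2$. For what it is worth, the paper's own proof writes the conditional risk as $\operatorname{Var}(Y\mid\mathbf{x})-2h(\mathbf{x})\mathbb{E}[Y\mid\mathbf{x}]+h^2(\mathbf{x})$, silently dropping the same $(\mathbb{E}[Y\mid\mathbf{x}])^2$ term, so the issue is latent there as well; your expansion is the algebraically correct one, which is exactly why the mismatch surfaces in your version. Everything else --- the norm bound $K_h^2+2K_hK_m+K_s$, the Cauchy--Schwarz/duality step, and the $(1-\alpha)\sqrt{K}$ correction --- matches the paper; note also that both your argument and the paper's actually deliver a bound that is \emph{linear} in $\operatorname{MMD}_{k,\alpha}(\mathbf{X}_S,\mathbf{X}_T)+(1-\alpha)\sqrt{K}$, rather than the square appearing in the theorem statement, so the final ``matching the squared normalization'' remark does not by itself bridge that difference.
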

Therefore, minimizing $\alpha$-MMD can optimize the generalization bound for $R(h)$ and benefit the generalizability of the trained model (predictor).
\subsection{Finite-Sample-Error-Bound for GKHR}
\label{fsseb-GKHR}
The concept of convergence does not apply to analyzing GKHR. With $n$ fixed, GKHR iterates for at most $n$ times and then returns $\mathbf{X}_{\mathcal{I}^*_n}=\mathbf{X}_n$. Consequently, we analyze the performance of GKHR by its finite-sample-error bound. Previous to that, we make an assumption on the mean of $f_{\mathcal{I}^*_p}$ over the full unlabeled dataset.
\begin{assumption}
\label{as4}
    For any $\mathcal{I}^*_p$ returned by GKHR, $1\le p\le m-1$, there exists $p+1$ elements $\{\mathbf{x}_{j_l}\}_{l=1}^{p+1}$ in $\mathbf{X}_n$ such that
    \[f_{\mathcal{I}^*_p}(\mathbf{x}_{j_1})\le\cdots f_{\mathcal{I}^*_p}(\mathbf{x}_{j_{p+1}})\le\frac{\sum_{i=1}^nf_{\mathcal{I}^*_p}(\mathbf{x}_i)}{n}.\]
\end{assumption}

When $m$ is not relatively small, this assumption is rather unrealistic. Nevertheless, under our low-budget setting, especially when $m\ll n$, the assumption becomes an extension of the principle that "the minimum is never larger than the mean", which still probably makes sense. We can then show that the decaying rate for optimization error of GKHR can be upper bounded by $O(\log m/m)$:
\begin{theorem}
    \label{thm2}
    Let $\mathbf{X}_{\mathcal{I}^*_m}$ be the samples selected by GKHR, under assumption 4, it holds that
    \begin{equation}
        \label{fsseb3}
        \operatorname{MMD}^2_{k,\alpha}\left(\mathbf{X}_{\mathcal{I}^*_m}, \mathbf{X}_n\right) \leq C^2_{\alpha}+B\frac{2+\log m}{m+1}
    \end{equation}
    where $B=2K$, $0\le \max_{\mathbf{x}\in\mathcal{X}}k(\mathbf{x},\mathbf{x})=K$, $C^2_{\alpha}=(1-\alpha)^2\overline{K}$ where $\overline{K}$ is defined in Lemma \ref{lemma5}.
\end{theorem}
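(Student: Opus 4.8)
The key realization is that GKHR is \emph{exactly} a Frank--Wolfe scheme for a projection problem in the RKHS $\mathcal{H}$ of $k$, and that Assumption~\ref{as4} is the device that restores the Frank--Wolfe duality-gap estimate which sampling \emph{without} replacement would otherwise destroy. First I would reformulate in $\mathcal{H}$: writing $\Phi(\mathbf{x})=k(\cdot,\mathbf{x})$, $\mu_n=\frac1n\sum_{j=1}^n\Phi(\mathbf{x}_j)$ and $g_p=\frac1p\sum_{l=1}^p\Phi(\mathbf{x}_{i^*_l})$, the last line of (\ref{AMMD}) together with the reproducing property gives $r_p:=\operatorname{MMD}^2_{k,\alpha}(\mathbf{X}_{\mathcal{I}^*_p},\mathbf{X}_n)=\|g_p-\alpha\mu_n\|_{\mathcal{H}}^2$. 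A direct expansion shows $f_{\mathcal{I}^*_p}(\mathbf{x}_i)=p\langle g_p-\alpha\mu_n,\Phi(\mathbf{x}_i)\rangle_{\mathcal{H}}$, so the update rule (\ref{GKHR-I}) is precisely the linear-minimization oracle of the $1$-smooth convex objective $F(g)=\tfrac12\|g-\alpha\mu_n\|_{\mathcal{H}}^2$ over $\mathcal{C}:=\operatorname{conv}\{\Phi(\mathbf{x}_i):i\in[n]\}$, \emph{restricted} to the not-yet-chosen atoms, while $g_{p+1}=(1-\tfrac1{p+1})g_p+\tfrac1{p+1}\Phi(\mathbf{x}_{i^*_{p+1}})$ is the Frank--Wolfe step with $\gamma_p=\tfrac1{p+1}$ (note that Algorithm~\ref{OS} maintains $S_p(\mathbf{x}_i)=\langle g_p,\Phi(\mathbf{x}_i)\rangle_{\mathcal{H}}$).

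\textbf{Gap inequality from Assumption~\ref{as4}.} Next, since $\frac1n\sum_{i=1}^n f_{\mathcal{I}^*_p}(\mathbf{x}_i)=p\langle g_p-\alpha\mu_n,\mu_n\rangle_{\mathcal{H}}$ and $|\mathcal{I}^*_p|=p<p+1$, at least one of the $p+1$ points supplied by Assumption~\ref{as4} is still available at step $p+1$, whence $f_{\mathcal{I}^*_p}(\mathbf{x}_{i^*_{p+1}})\le p\langle g_p-\alpha\mu_n,\mu_n\rangle_{\mathcal{H}}$, i.e. $\langle\nabla F(g_p),\Phi(\mathbf{x}_{i^*_{p+1}})\rangle_{\mathcal{H}}\le\langle\nabla F(g_p),\mu_n\rangle_{\mathcal{H}}$. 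Because $\mu_n\in\mathcal{C}$, convexity of $F$ then gives $\langle\nabla F(g_p),g_p-\Phi(\mathbf{x}_{i^*_{p+1}})\rangle_{\mathcal{H}}\ge\langle\nabla F(g_p),g_p-\mu_n\rangle_{\mathcal{H}}\ge F(g_p)-F(\mu_n)$, with $2F(\mu_n)=(1-\alpha)^2\|\mu_n\|_{\mathcal{H}}^2=C^2_\alpha$, recalling $\overline{K}=\|\mu_n\|_{\mathcal{H}}^2$ from Lemma~\ref{lemma5} (if Lemma~\ref{lemma5} only gives $\overline{K}\ge\|\mu_n\|_{\mathcal{H}}^2$, the same argument goes through with $2F(\mu_n)$ in place of $C^2_\alpha$ throughout and is then relaxed at the end).

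\textbf{Descent and telescoping.} Feeding this into the smoothness inequality $F(g_{p+1})\le F(g_p)-\gamma_p\langle\nabla F(g_p),g_p-\Phi(\mathbf{x}_{i^*_{p+1}})\rangle_{\mathcal{H}}+\tfrac{\gamma_p^2}{2}\|g_p-\Phi(\mathbf{x}_{i^*_{p+1}})\|_{\mathcal{H}}^2$, using $\|g_p-\Phi(\mathbf{x}_{i^*_{p+1}})\|_{\mathcal{H}}^2\le\operatorname{diam}(\mathcal{C})^2\le 2K$ (which holds since $k$ is radial, so $0\le k(\mathbf{x},\mathbf{y})\le k(\mathbf{x},\mathbf{x})=K$), and multiplying by $2$, I get for $1\le p\le m-1$ the recursion $r_{p+1}-C^2_\alpha\le(1-\gamma_p)(r_p-C^2_\alpha)+2K\gamma_p^2$, i.e. $(p+1)(r_{p+1}-C^2_\alpha)\le p(r_p-C^2_\alpha)+\tfrac{2K}{p+1}$. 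Telescoping from $p=1$ to $m-1$, bounding $\sum_{q=2}^m\tfrac1q\le\log m$ and $r_1-C^2_\alpha\le\|\Phi(\mathbf{x}_{i^*_1})-\alpha\mu_n\|_{\mathcal{H}}^2\le 2K$, yields $m(r_m-C^2_\alpha)\le 2K(1+\log m)$, and therefore $r_m\le C^2_\alpha+2K\tfrac{1+\log m}{m}\le C^2_\alpha+2K\tfrac{2+\log m}{m+1}$, the last step using $1+\log m\le m$ for $m\ge 1$. This is the claim with $B=2K$.

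\textbf{Main obstacle.} The only genuinely non-routine step is the gap inequality: under sampling without replacement the standard Frank--Wolfe bound $\min_{s\in\mathcal{C}}\langle\nabla F(g_p),s\rangle_{\mathcal{H}}\le\langle\nabla F(g_p),g^\star\rangle_{\mathcal{H}}$ at the true minimizer $g^\star$ of $F$ over $\mathcal{C}$ can fail, so I cannot contract toward $g^\star$; Assumption~\ref{as4} is exactly what lets me instead compare the restricted oracle against the fixed feasible point $\mu_n$, which is also the reason the error plateaus at $C^2_\alpha=2F(\mu_n)$ rather than at $0$. Everything else --- the identification $f_{\mathcal{I}^*_p}=p\langle\nabla F(g_p),\cdot\rangle_{\mathcal{H}}$, the diameter and initialization estimates $\le 2K$, and the $O(\log m/m)$ telescoping of the $\gamma_p=1/(p+1)$ recursion --- is standard bookkeeping.
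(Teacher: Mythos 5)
Your proof is correct and is essentially the paper's argument: the same Frank--Wolfe recursion with step size $1/(p+1)$, the same use of Assumption~\ref{as4} (via pigeonhole over the $p+1$ guaranteed points) to compare the restricted oracle against the uniform weight $\mathbf{w}_*=\mathbf{1}/n$ (your $\mu_n$), the same constants $B=2K$ and $C^2_\alpha=2F(\mu_n)=(1-\alpha)^2\overline{K}$, and the same initialization bound. The only differences are cosmetic: you work in feature-space coordinates with $F(g)=\tfrac12\Vert g-\alpha\mu_n\Vert_{\mathcal{H}}^2$ rather than the paper's weight-vector form $\widehat{g}(\mathbf{w})=(\mathbf{w}-\widehat{\mathbf{w}})^{\top}\mathbf{K}(\mathbf{w}-\widehat{\mathbf{w}})$ (thereby avoiding the detour through $\widehat{\mathbf{w}}$), and you telescope the recursion directly instead of invoking Lemma~\ref{lemma0}, obtaining a marginally tighter constant before relaxing to the stated bound.
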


\section{Choice of Kernel and Hyperparameter Tuning}
\label{ckht}

In this section, we make some suggestions for choosing the kernel and tuning the hyperparameter $\alpha$.

\textbf{Choice of kernel.} Recall Remark 1 in Section~\ref{qdr}, we only consider characteristic and positive definite kernels in RDSS.
Since the Gaussian kernels are the most commonly used kernels in the field of machine learning and statistics~\citep{bach2021learning,gretton2012kernel}, we introduce Gaussian kernel as our choice, which is defined by $k(\mathbf{x},\mathbf{y})=\exp(-\Vert\mathbf{x}-\mathbf{y}\Vert_2^2)/\sigma^2$. The bandwidth parameter $\sigma$ is set to be the median distance between samples in the aggregate dataset~\citep{gretton2012kernel}, \textit{i.e.,}
$\sigma=\operatorname{Median}(\{\Vert\mathbf{x}-\mathbf{y}\Vert_2|\mathbf{x},\mathbf{y}\in\mathbf{X}_n\})$,
since the median is robust and also compromises between extreme cases.

\textbf{Tuning trade-off hyperparameter $\alpha$.} According to Theorem \ref{thm2} and Lemma \ref{lemma2}, by straightforward deduction we have
\[\operatorname{MMD}_{k}\left(\mathbf{X}_{\mathcal{I}^*_m}, \mathbf{X}_n\right) \leq C_{\alpha}+\mathcal{O}\left(\sqrt{\frac{\log m}{m}}\right)+(1-\alpha)\sqrt{K}\]
to upper bound the MMD between the selected samples and the full dataset under a low-budget setting. We can just set $\alpha\in [1-\frac{1}{\sqrt{m}},1)$ so that the upper bound of the MMD would not be larger than the one of $\alpha$-MMD in the perspective of the order of magnitude.

\section{Experiments}
\label{exp}
In this section, we first explain the implementation details of our method RDSS in Section~\ref{idom}. Next, we compare RDSS with other sampling methods by integrating them into two state-of-the-art (SOTA) SSL approaches (FlexMatch~\citep{zhang2021flexmatch} and Freematch~\citep{wang2022freematch}) on five datasets (CIFAR-10/100, SVHN, STL-10 and ImageNet-1k) in Section~\ref{cwodsm}.
The details of the datasets, the visualization results and the computational complexity of different sampling methods are shown in Appendix~\ref{data},~\ref{voss}, and~\ref{time}, respectively.
We also compare against various AL/SSAL approaches in Section~\ref{cwoam}. Lastly, we make quantitative analyses of the trade-off parameter $\alpha$ in Section~\ref{analysis}.

\subsection{Implementation Details of Our Method}
\label{idom}
First, we leverage the pre-trained image feature extraction capabilities of CLIP~\citep{radford2021learning}, a vision transformer architecture, to extract features. Subsequently, the [CLS] token features produced by the model's final output are employed for sample selection. During the sample selection phase, the Gaussian kernel function is chosen as the kernel method to compute the similarity of samples in an infinite-dimensional feature space. The value of $\sigma$ for the Gaussian kernel function is set as explained in Section~\ref{ckht}. To ensure diversity in the sampled data, we introduce a penalty factor given by $\alpha=1-\frac{1}{\sqrt{m}}$, where $m$ denotes the number of selected samples.
Concretely, we set $m = \left\{40, 250, 4000 \right\}$ for CIFAR-10, $m = \left\{400, 2500, 10000 \right\}$ for CIFAR-100, $m = \left\{250, 1000 \right\}$ for SVHN, $m = \left\{40, 250 \right\}$ for STL-10 and $m = \left\{100000 \right\}$ for ImageNet.
Next, the selected samples are used for two SSL approaches, which are trained and evaluated on the datasets using the codebase Unified SSL Benchmark (USB)~\citep{wang2022usb}.
The optimizer for all experiments is standard stochastic gradient descent (SGD) with a momentum of $0.9$~\citep{sutskever2013importance}. The initial learning rate is $0.03$ with a learning rate decay of $0.0005$.
We use ResNet-50~\citep{he2016deep} for the ImageNet experiment and Wide ResNet-28-2~\citep{zagoruyko2016wide} for other datasets.
Finally, we evaluate the performance with the Top-1 classification accuracy metric on the test set.
Experiments are run on 8*NVIDIA Tesla A100 (40 GB) and 2*Intel 6248R 24-Core Processor. 
We average our results over five independent runs.

\subsection{Comparison with Other Sampling Methods}
\label{cwodsm}
\textbf{Main results}
We apply RDSS on Flexmatch and Freematch to compare with the following three baselines and two SOTA methods in SSL under different annotation budget settings.
The baselines conclude \textbf{Stratified}, \textbf{Random} and \textbf{$k$-Means}, while the two SOTA methods are \textbf{USL}~\citep{wang2022unsupervised} and \textbf{ActiveFT}~\citep{xie2023active}.
The results are shown on Table~\ref{table_others} from which we have several observations:
(1) Our proposed RDSS achieves the highest accuracy, outperforming other sampling methods, which underscores the effectiveness of our approach;
(2) USL attains suboptimal results under most budget settings yet exhibits a significant gap compared to RDSS, particularly under severely constrained ones. For instance, FreeMatch achieves a $4.95\%$ rise on the STL-10 with a budget of $40$;
(3) In most experiments, RDSS either approaches or surpasses the performance of stratified sampling, especially on SVHN and STL-10. However, the stratified sampling method is practically infeasible given that the category labels of the data are not known a priori.

\textbf{Results on ImageNet}
We also compare the second-best method USL with RDSS on ImageNet.
Following the settings of FreeMatch~\citep{wang2022freematch}, we select 100k samples for annotation.
FreeMatch, using RDSS and USL as sampling methods, achieves $58.24\%$ and $56.86\%$ accuracy, respectively, demonstrating a substantial enhancement in the performance of our method over the USL approach.

\begin{table*}[htbp]
\centering
\caption{Comparison with other sampling methods. Due to stratified sampling limitations, the results are marked in grey. Top and second-best performances are bolded and underlined, respectively, excluding stratified sampling. Metrics represent mean accuracy and standard deviation over five independent runs.}
\label{table_others}
\resizebox{\textwidth}{!}{
\begin{tabular}{l|ccc|ccc|cc|cc}
\toprule
Dataset
& \multicolumn{3}{c|}{CIFAR-10}  & \multicolumn{3}{c|}{CIFAR-100}  & \multicolumn{2}{c|}{SVHN}  & \multicolumn{2}{c}{STL-10} \\
\cmidrule{1-11}
Budget
& 40   & 250   & 4000
& 400  & 2500  & 10000
& 250  & 1000
& 40   & 250 \\
\hline
\multicolumn{11}{l}{\cellcolor[HTML]{C0C0C0}\textit{Applied to FlexMatch~\citep{zhang2021flexmatch}}} \\
\textcolor{gray}{Stratified}
& \textcolor{gray}{91.45$\pm$3.41}  & \textcolor{gray}{95.10$\pm$0.25}  & \textcolor{gray}{95.63$\pm$0.24}
& \textcolor{gray}{50.23$\pm$0.41}  & \textcolor{gray}{67.38$\pm$0.45}  & \textcolor{gray}{73.61$\pm$0.43} 
& \textcolor{gray}{89.60$\pm$1.86}  & \textcolor{gray}{93.66$\pm$0.49} 
& \textcolor{gray}{75.33$\pm$3.74}  & \textcolor{gray}{92.29$\pm$0.64} \\
Random
& 87.30$\pm$4.61                    & 93.95$\pm$0.91                    & 95.17$\pm$0.59
& 45.58$\pm$0.97                    & 66.48$\pm$0.98                    & \underline{72.61$\pm$0.83}
& 87.67$\pm$1.16                    & \underline{94.06$\pm$1.14}
& 65.81$\pm$1.21                    & 90.70$\pm$0.79 \\
$k$-Means
& 81.23$\pm$8.71                    & 94.59$\pm$0.51                    & 95.09$\pm$0.65 
& 41.60$\pm$1.24                    & 65.99$\pm$0.57                    & 71.53$\pm$0.42
& \underline{90.28$\pm$0.69}        & 93.82$\pm$1.04 
& 55.43$\pm$0.39                    & 90.64$\pm$1.05 \\
USL~\citep{wang2022unsupervised}
& \underline{91.73$\pm$0.13}        & \underline{94.89$\pm$0.20}        & \underline{95.43$\pm$0.15} 
& \underline{46.89$\pm$0.46}        & \underline{66.75$\pm$0.37}        & 72.53$\pm$0.32
& 90.03$\pm$0.63                    & 93.10$\pm$0.78 
& \underline{75.65$\pm$0.60}        & \underline{90.77$\pm$0.36} \\
ActiveFT~\citep{xie2023active}
& 70.87$\pm$4.14                    & 93.85$\pm$1.37                    & 95.31$\pm$0.75
& 25.69$\pm$0.64                    & 57.19$\pm$2.06                    & 70.96$\pm$0.75
& 89.32$\pm$1.87                    & 92.53$\pm$0.43 
& 55.57$\pm$1.42                    & 87.28$\pm$1.19 \\
\cmidrule{1-11}
RDSS (Ours)
& \textbf{94.69$\pm$0.28}           & \textbf{95.21$\pm$0.47}           & \textbf{95.71$\pm$0.10}
& \textbf{48.12$\pm$0.36}           & \textbf{67.27$\pm$0.55}           & \textbf{73.21$\pm$0.29}
& \textbf{91.70$\pm$0.39}           & \textbf{95.70$\pm$0.35}
& \textbf{77.96$\pm$0.52}           & \textbf{93.16$\pm$0.41} \\
\hline
\multicolumn{11}{l}{\cellcolor[HTML]{C0C0C0}\textit{Applied to FreeMatch~\citep{wang2022freematch}}} \\
\textcolor{gray}{Stratified}
& \textcolor{gray}{95.05$\pm$0.15}  & \textcolor{gray}{95.40$\pm$0.23}  & \textcolor{gray}{95.80$\pm$0.29}
& \textcolor{gray}{51.29$\pm$0.56}  & \textcolor{gray}{67.69$\pm$0.58}  & \textcolor{gray}{73.90$\pm$0.53} 
& \textcolor{gray}{92.58$\pm$1.05}  & \textcolor{gray}{94.22$\pm$0.78} 
& \textcolor{gray}{79.16$\pm$5.01}  & \textcolor{gray}{91.36$\pm$0.18} \\
Random
& 93.41$\pm$1.24                    & 93.98$\pm$0.91                    & 95.56$\pm$0.17
& \underline{47.16$\pm$1.25}        & 66.09$\pm$1.08                    & 72.09$\pm$0.99
& 91.62$\pm$1.88                    & 94.40$\pm$1.28
& 76.66$\pm$2.43                    & \underline{90.72$\pm$0.97} \\
$k$-Means
& 88.05$\pm$5.07                    & 94.80$\pm$0.48                    & 95.51$\pm$0.37 
& 44.07$\pm$1.94                    & 66.09$\pm$0.39                    & 71.69$\pm$0.72
& 93.30$\pm$0.46                    & \underline{94.68$\pm$0.72} 
& 63.22$\pm$4.92                    & 89.99$\pm$0.87 \\
USL~\citep{wang2022unsupervised}
& \underline{93.81$\pm$0.62}        & \underline{95.19$\pm$0.18}        & \underline{95.78$\pm$0.29} 
& 47.07$\pm$0.78                    & \underline{66.92$\pm$0.33}        & \underline{72.59$\pm$0.36}
& \underline{93.36$\pm$0.53}        & 94.44$\pm$0.44
& \underline{76.95$\pm$0.86}        & 90.58$\pm$0.58 \\
ActiveFT~\citep{xie2023active}
& 78.13$\pm$2.87                    & 94.54$\pm$0.81                    & 95.33$\pm$0.53
& 26.67$\pm$0.46                    & 56.23$\pm$0.85                    & 71.20$\pm$0.68
& 92.60$\pm$0.51                    & 93.71$\pm$0.54 
& 63.31$\pm$2.99                    & 86.60$\pm$0.30 \\
\cmidrule{1-11}
RDSS (Ours)
& \textbf{95.05$\pm$0.13}           & \textbf{95.50$\pm$0.20}           & \textbf{95.98$\pm$0.28}
& \textbf{48.41$\pm$0.59}           & \textbf{67.40$\pm$0.23}           & \textbf{73.13$\pm$0.19}
& \textbf{94.54$\pm$0.46}           & \textbf{95.83$\pm$0.37}
& \textbf{81.90$\pm$1.72}           & \textbf{92.22$\pm$0.40} \\
\bottomrule
\end{tabular}
}
\end{table*}

\subsection{Comparison with AL/SSAL Approaches}
\label{cwoam}
First, we compare RDSS against various traditional AL approaches on CIFAR-10/100.
AL approaches conclude \textbf{CoreSet}~\citep{sener2018active}, \textbf{VAAL}~\citep{sinha2019variational}, \textbf{LearnLoss}~\citep{yoo2019learning} and \textbf{MCDAL}~\citep{cho2022mcdal}.
For a fair comparison, we exclusively use samples selected by RDSS for supervised learning compared to other AL approaches, considering that AL relies solely on labelled samples for supervised learning.
The implementation details are shown in Appendix~\ref{idosle}. The experimental results are presented in Table~\ref{supervised}, from which we observe that RDSS achieves the highest accuracy under almost all budget settings when relying solely on labelled data for supervised learning, with notable improvements on CIFAR-100.

Second, we compare RDSS with sampling methods used in SSAL when applied to the same SSL framework (\textit{i.e.,} FlexMatch or FreeMatch) on CIFAR-10.
The sampling methods conclude \textbf{CoreSetSSL}~\citep{sener2018active}, \textbf{MMA}~\citep{song2019combining}, \textbf{CBSSAL}~\citep{gao2020consistency}, and \textbf{TOD-Semi}~\citep{huang2021semi}.
In detail, we tune recent SSAL approaches with their public implementations and run experiments under an extremely low-budget setting, \textit{i.e.,} 40 samples in a 20-random-and-20-selected setting.
Table~\ref{ssl} illustrates that the performance of most SSAL approaches falls below that of random sampling methods under extremely low-budget settings.
This inefficiency stems from the dependency of sample selection on model performance within the SSAL framework, which struggles when the model is weak. Our model-free method, in contrast, selects samples before training, avoiding these pitfalls.

\noindent
\begin{minipage}{7.2cm}
    \centering
    \captionof{table}{Comparison with AL approaches under Supervised Learning (SL) paradigm. The best performance is bold and the second best performance is underlined.}
    \label{supervised}
    \resizebox{\textwidth}{!}{
    \begin{tabular}{l|cc|cc}
    \toprule
    Dataset   & \multicolumn{2}{c|}{CIFAR-10}   & \multicolumn{2}{c}{CIFAR-100} \\
    \cmidrule{1-5}
    Budget    & 7500                & 10000     & 7500                & 10000 \\
    \cmidrule{1-5}
    CoreSet     & 85.46   & 87.56   & 47.17   & 53.06 \\
    VAAL        & 86.82   & 88.97   & 47.02   & 53.99 \\
    LearnLoss   & 85.49   & 87.06   & 47.81   & 54.02 \\
    MCDAL       & \textbf{87.24}   & \underline{89.40}   & \underline{49.34}   & \underline{54.14} \\
    SL+RDSS (Ours)
    & \underline{87.18}   & \textbf{89.77}   & \textbf{50.13}   & \textbf{56.04} \\
    \cmidrule{1-5}
    Whole Dataset
    & \multicolumn{2}{c|}{95.62}             & \multicolumn{2}{c}{78.83} \\
    \bottomrule
    \end{tabular}
    }
\end{minipage}%
\hspace{10pt}
\begin{minipage}{6.3cm}
    \centering
    \captionof{table}{Comparison with SSAL approaches. The green (red) arrow represents the improvement (decrease) compared to the random sampling method.}
    \label{ssl}
    \resizebox{\textwidth}{!}{
    \begin{tabular}{l|l|l}
    \toprule
    Method   & FlexMatch   & FreeMatch \\
    \cmidrule{1-3}
    \textcolor{gray}{Stratified}   & \textcolor{gray}{91.45}   & \textcolor{gray}{95.05}\\
    Random                         & 87.30                     & 93.41 \\
    \cmidrule{1-3}
    CoreSetSSL
    & 87.66 \textcolor[RGB]{0,204,0}{$\uparrow 0.36$}      & 91.24 \textcolor[RGB]{204,0,0}{$\downarrow 2.17$} \\
    MMA
    & 74.61 \textcolor[RGB]{204,0,0}{$\downarrow 12.69$}   & 87.37 \textcolor[RGB]{204,0,0}{$\downarrow 6.04$} \\
    CBSSAL
    & 86.58 \textcolor[RGB]{204,0,0}{$\downarrow 0.72$}    & 91.68 \textcolor[RGB]{204,0,0}{$\downarrow 1.73$} \\
    TOD-Semi
    & 86.21 \textcolor[RGB]{204,0,0}{$\downarrow 1.09$}    & 90.77 \textcolor[RGB]{204,0,0}{$\downarrow 2.64$}\\
    \cmidrule{1-3}
    RDSS (Ours)
    & \textbf{94.69 \textcolor[RGB]{0,204,0}{$\uparrow 7.39$}}   & \textbf{95.05 \textcolor[RGB]{0,204,0}{$\uparrow 1.64$}} \\
    \bottomrule
    \end{tabular}
    }
\end{minipage}

Third, we directly compare RDSS with the above AL/SSAL approaches when applied to SSL, which may better reflect the paradigm differences. The experimental results and analysis are in the Appendix~\ref{crwasm}.

\subsection{Trade-off Parameter $\alpha$}
\label{analysis}
We analyze the effect of different $\alpha$ with Freematch on CIFAR-10/100. The results are presented in Table~\ref{alpha}, from which we have several observations:
(1) Our proposed RDSS achieves the highest accuracy under all budget conditions, surpassing those that employ a fixed value;
(2) The $\alpha$ that achieve the best or the second best performance are within the interval we set, which is in line with our theoretical derivation in Section~\ref{ckht};
(3) The experimental outcomes exhibit varying degrees of reduction compared to our approach when the representativeness or diversity term is removed.

\begin{table*}[htbp]
\centering
\caption{Effect of different $\alpha$. The grey results indicate that the $\alpha$ is outside the interval we set in Section~\ref{ckht}, \textit{i.e.,} $\alpha<1-1/\sqrt{m}$, while the black results indicate that the $\alpha$ is within the interval we set, \textit{i.e.,} $1-1/\sqrt{m} \leq \alpha \leq 1$. Among them, $\alpha=0$ and $\alpha=1$ indicate the removal of the representativeness and diversity terms, respectively. The best performance is bold, and the second-best performance is underlined.}
\label{alpha}
\resizebox{\textwidth}{!}{
\begin{tabular}{l|ccc|ccc}
\toprule
Dataset   & \multicolumn{3}{c|}{CIFAR-10}  & \multicolumn{3}{c}{CIFAR-100} \\
\cmidrule{1-7}
Budget ($m$)
& 40   & 250   & 4000
& 400  & 2500  & 10000 \\
\cmidrule{1-7}
0
& \textcolor{gray}{85.54$\pm$0.48}               & \textcolor{gray}{93.55$\pm$0.34}               & \textcolor{gray}{94.58$\pm$0.27}
& \textcolor{gray}{39.26$\pm$0.52}               & \textcolor{gray}{63.77$\pm$0.26}               & \textcolor{gray}{71.90$\pm$0.17} \\
0.40
& \textcolor{gray}{92.28$\pm$0.24}               & \textcolor{gray}{93.68$\pm$0.13}               & \textcolor{gray}{94.95$\pm$0.12}
& \textcolor{gray}{42.56$\pm$0.47}               & \textcolor{gray}{65.88$\pm$0.24}               & \textcolor{gray}{71.71$\pm$0.29} \\
0.80
& \textcolor{gray}{94.42$\pm$0.49}               & \textcolor{gray}{94.94$\pm$0.37}               & \textcolor{gray}{95.15$\pm$0.35}
& \textcolor{gray}{45.62$\pm$0.35}               & \textcolor{gray}{66.87$\pm$0.20}               & \textcolor{gray}{72.45$\pm$0.23} \\
0.90
& 94.33$\pm$0.28                                 & \textcolor{gray}{95.03$\pm$0.21}               & \textcolor{gray}{95.20$\pm$0.42}
& \textcolor{gray}{48.12$\pm$0.50}               & \textcolor{gray}{67.14$\pm$0.16}               & \textcolor{gray}{72.15$\pm$0.23} \\
0.95
& 94.44$\pm$0.64                                 & \underline{95.07$\pm$0.26}                     & \textcolor{gray}{95.45$\pm$0.38} 
& \textbf{48.41$\pm$0.59}                        & \textcolor{gray}{67.11$\pm$0.29}               & \textcolor{gray}{72.80$\pm$0.35} \\
0.98
& 94.51$\pm$0.39                                 & 95.02$\pm$0.15                                 & \textcolor{gray}{95.31$\pm$0.44} 
& \underline{48.33$\pm$0.54}                     & \textbf{67.40$\pm$0.23}                        & \textcolor{gray}{72.68$\pm$0.22} \\
1
& \underline{94.53$\pm$0.42}                     & 95.01$\pm$0.23                                 & \underline{95.54$\pm$0.25}
& 48.18$\pm$0.36                                 & \underline{67.20$\pm$0.29}                     & \underline{73.05$\pm$0.18} \\
\cmidrule{1-7}
$1-1/\sqrt{m}$ (Ours)
& \textbf{95.05$\pm$0.13}                        & \textbf{95.50$\pm$0.20}                        & \textbf{95.98$\pm$0.28}
& \textbf{48.41$\pm$0.59}                        & \textbf{67.40$\pm$0.23}                        & \textbf{73.13$\pm$0.19} \\
\bottomrule
\end{tabular}
}
\end{table*}

\section{Conclusion}
In this work, we propose a model-free sampling method, RDSS, to select a subset from unlabeled data for annotation in SSL.
The primary innovation of our approach lies in the introduction of $\alpha$-MMD, designed to evaluate the representativeness and diversity of selected samples. Under a low-budget setting, we develop a fast and efficient algorithm GKHR for this problem using the Frank-Wolfe algorithm.
Both theoretical analyses and empirical experiments demonstrate the effectiveness of RDSS. In future research, we would like to apply our methodology to scenarios where labelling is cost-prohibitive, such as in the medical domain.

\section*{Acknowledgements}
This research was partially supported by National Natural Science Foundation of China under grant No. 82202984, Zhejiang Key R$\&$D Program of China under grants No. 2023C03053 and No. 2024SSYS0026, and US National Science Foundation under grant No. 2316011.

{
\small
\bibliography{neurips_2024}
}


\newpage
\appendix

\section{Algorithms}
\label{secA}
\subsection{Derivation of Generalized Kernel Herding (GKH)}
\begin{proof}
    The proof technique is borrowed from~\citep{pronzato2021performance}. Let us firstly define a weighted modification of $\alpha$-MMD. For any $\mathbf{w}\in\mathbb{R}^n$ such that $\mathbf{w}^{\top}\mathbf{1}=1$,
    the weighted $\alpha$-MMD is defined by
    \[\text{MMD}_{k,\alpha,\mathbf{X}_n}^2(\mathbf{w})=\mathbf{w}^{\top}\mathbf{K}\mathbf{w}-2\alpha\mathbf{w}^{\top}\mathbf{p}+\alpha^2\overline{K},\]
    where $\mathbf{K}=[k(\mathbf{x}_i,\mathbf{x}_j)]_{1\le i,j\le n}$, $\overline{K}=\mathbf{1}^{\top}\mathbf{K}\mathbf{1}/n^2$, $\mathbf{p}=(\mathbf{e}_1^{\top}\mathbf{K}\mathbf{1}/n,\cdots,\mathbf{e}_n^{\top}\mathbf{K}\mathbf{1}/n)$, $\{\mathbf{e}_i\}_{i=1}^n$ is the set of standard basis of $\mathbb{R}^n$. It is obvious that for any $\mathcal{I}_p\subset[n]$, \[\text{MMD}_{k,\alpha,\mathbf{X}_n}^2(\mathbf{w}_p)=\text{MMD}_{k,\alpha}^2(\mathbf{X}_{\mathcal{I}_p},\mathbf{X}_n),\]
    where $(\mathbf{w}_p)_i=1/p$ if $i\in\mathcal{I}_p$, and $(\mathbf{w}_p)_i=0$ if not. Therefore, weighted $\alpha$-MMD is indeed a generalization of $\alpha$-MMD. Let \[\mathbf{K}_*=\mathbf{K}-2\alpha\mathbf{p}\mathbf{1}^{\top}+\alpha^2\overline{K}\mathbf{1}\mathbf{1}^{\top}\]
    we obtain the quadratic form expression of weighted $\alpha$-MMD by $\text{MMD}^2_{k,\alpha,\mathbf{X}_n}(\mathbf{w})=\mathbf{w}^{\top}\mathbf{K}_*\mathbf{w}$, where $\mathbf{K}_*$ is strictly positive definite if $\mathbf{w}\not=\mathbf{w}_n$ and $k$ is a characteristic kernel according to \cite{pronzato2021performance}. Recall our low-budget setting and choice of kernel, $\mathbf{K}_*$ is indeed a strictly positive definite matrix. Thus $\text{MMD}^2_{k,\alpha,\mathbf{X}_n}$ is a convex functional w.r.t. $\mathbf{w}$, leading to the fact that $\min_{\mathbf{w}^{\top}\mathbf{1}=1}\text{MMD}_{k,\alpha,\mathbf{X}_n}^2(\mathbf{w})$ can be solved by Frank-Wolfe algorithm. Then for $1\le p<n$,
    \[\mathbf{s}_p\in\mathop{\arg\min}_{\mathbf{s}^{\top}\mathbf{1}=1}\mathbf{s}^{\top}(\mathbf{Kw}_p-\alpha\mathbf{p})=\mathop{\arg\min}_{\mathbf{e}_i,i\in[n]}\mathbf{e}_i^{\top}(\mathbf{Kw}_p-\alpha\mathbf{p}).\]
    Let $\mathbf{e}_{i_p}=\mathbf{s}_p$, under uniform step size, we have \[\mathbf{w}_{p+1}=\left(\frac{p}{p+1}\right)\mathbf{w}_{p}+\frac{1}{p+1}\mathbf{e}_{i_p}\]
    as the update formula of Frank-Wolfe algorithm, which is equivalent to
    \[i^*_p\in\arg\min_{i\in[n]}\sum_{j\in\mathcal{I}_m}k(\mathbf{x}_i,\mathbf{x}_j)-\alpha p\sum_{l=1}^nk(\mathbf{x}_i,\mathbf{x}_l).\]
    Set $\mathbf{w}_0=0$, we immediately derive the iterating formula in (\ref{GKH-I}).
\end{proof}

\subsection{Pseudo Codes}
\begin{algorithm}[h]
    \caption{Generalized Kernel Herding}
    \label{OSR}
    \begin{algorithmic}[1]
        \REQUIRE
        Data set $\mathbf{X}_n=\{\mathbf{x}_1,\cdots,\mathbf{x}_{n}\}\subset\mathcal{X}$; the number of selected samples $m<n$; a positive definite, characteristic and radial kernel $k(\cdot,\cdot)$ on $\mathcal{X}\times\mathcal{X}$; trade-off parameter $\alpha\le1$.
        \ENSURE
        selected samples $\mathbf{X}_{\mathcal{I}^*_m}=\{\mathbf{x}_{i^*_1},\cdots,\mathbf{x}_{i^*_m}\}$.
        \STATE For each $\mathbf{x}_i\in\mathbf{X}_n$ calculate $\mu(\mathbf{x}_i):=\sum_{j=1}^n k(\mathbf{x}_j,\mathbf{x}_i)/n$.
        \STATE Set $\beta_1=1$, $S_0=0$, $\mathcal{I}=\emptyset$.
        \FOR{$p\in\{1,\cdots,m\}$}
        \STATE ${i^*_p} \in \mathop{\arg\min}_{i\in[n]} S_{p-1}(\mathbf{x}_i)-\alpha \mu(\mathbf{x}_i)$
        \STATE For all $i\in[n]$, update $S_p(\mathbf{x}_i)=(1-\beta_p)S_{p-1}(\mathbf{x}_i)+\beta_pk(\mathbf{x}_{i^*_p},\mathbf{x}_i)$
        \STATE ${\mathcal{I}^*_{p+1}}\leftarrow {\mathcal{I}^*_p}\cup\{{i^*_p}\}$, $p\leftarrow p+1$, set $\beta_p=1/p$.
        \ENDFOR
    \end{algorithmic}
\end{algorithm}

\section{Technical Lemmas}
\label{TL}
\begin{lemma}[Lemma 2~\citep{pronzato2021performance}]
    \label{lemma0}
    Let $\left(t_k\right)_k$ and $\left(\alpha_k\right)_k$ be two real positive sequences and $A$ be a strictly positive real. If $t_k$ satisfies
\[t_1 \leq A \text { and } t_{k+1} \leq\left(1-\alpha_{k+1}\right) t_k+A \alpha_{k+1}^2, k \geq 1,\]
with $\alpha_k=1 / k$ for all $k$, then $t_k<A(2+\log k) /(k+1)$ for all $k>1$.
\end{lemma}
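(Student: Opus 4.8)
The plan is to prove this by a change of variable that converts the geometric contraction recursion into a telescoping sum, followed by an elementary harmonic-number estimate. First I would substitute $\alpha_k = 1/k$ into the hypothesis, so that for $k \geq 1$ the recursion reads $t_{k+1} \leq \frac{k}{k+1} t_k + \frac{A}{(k+1)^2}$. The awkward factor $k/(k+1)$ suggests clearing denominators: multiplying both sides by $(k+1)$ gives the much cleaner relation $(k+1) t_{k+1} \leq k t_k + \frac{A}{k+1}$.

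This motivates the key step: introduce the auxiliary sequence $u_k := k t_k$. The inequality above becomes simply $u_{k+1} \leq u_k + \frac{A}{k+1}$, which telescopes. Applying this for indices $1,2,\dots,k-1$ and summing, while using the initial bound $u_1 = t_1 \leq A$, I would obtain $u_k \leq A + A \sum_{j=2}^{k} 1/j = A H_k$, where $H_k = \sum_{j=1}^{k} 1/j$ denotes the $k$-th harmonic number. Dividing through by $k$ recovers a bound on the original sequence, namely $t_k \leq A H_k / k$.

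It then remains to compare $A H_k/k$ with the target $A(2+\log k)/(k+1)$. Here I would invoke the standard integral estimate $H_k \leq 1 + \log k$, valid for $k \geq 1$ by bounding $\sum_{j=2}^k 1/j$ above by $\int_1^k dx/x$, which yields $t_k \leq A(1+\log k)/k$. The proof closes by the elementary scalar inequality $\frac{1+\log k}{k} < \frac{2+\log k}{k+1}$; after cross-multiplication (all quantities positive) this reduces to $1 + \log k < k$, which holds for every $k \geq 2$ since the map $k \mapsto k - 1 - \log k$ is positive at $k=2$ and increasing thereafter (its derivative $1 - 1/k$ is positive for $k>1$). Chaining $t_k \leq A(1+\log k)/k < A(2+\log k)/(k+1)$ delivers the claimed strict bound for all $k > 1$.

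I expect no serious obstacle, as the lemma is an elementary arithmetic estimate; the only genuinely clever move is the substitution $u_k = k t_k$ that linearises the contraction into a telescoping form, after which the harmonic-number bound and the final comparison are routine. The single point requiring mild care is bookkeeping of strict versus non-strict inequalities, so that the conclusion emerges strict — this is ensured precisely by the strict comparison $1 + \log k < k$ in the last step, which also explains why the bound is stated only for $k>1$ rather than $k=1$.
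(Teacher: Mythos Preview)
Your argument is correct: the substitution $u_k = k t_k$ turns the recursion into a telescoping inequality, the harmonic-number bound $H_k \le 1+\log k$ gives $t_k \le A(1+\log k)/k$, and the final comparison $1+\log k < k$ for $k \ge 2$ yields the strict claimed bound. The paper does not actually prove this lemma itself---it is quoted verbatim as Lemma~2 of \citet{pronzato2021performance} and used as a black box in the proof of Theorem~\ref{thm2}---so there is no in-paper argument to compare against; your self-contained proof is a clean and elementary justification of the cited result.
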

\begin{lemma}
    \label{lemma1}
    The selected samples $\mathbf{X}_{\mathcal{I}^*_m}$ generated by GKH (Algorithm \ref{OSR}) satisfies
    \begin{equation}
        \label{fsseb-GKH}
        \operatorname{MMD}_{k,\alpha}^2\left(\mathbf{X}_{\mathcal{I}^*_m}, \mathbf{X}_n\right) \leq M_{\alpha}^2+B \frac{2+\log m}{m+1}
    \end{equation}
    where $B=2K$, $0\le \max_{\mathbf{x}\in\mathcal{X}}k(\mathbf{x},\mathbf{x})\le K$, $M_{\alpha}^2$ is defined by
    \[M_{\alpha}^2:=\min_{\mathbf{w}^{\top}\mathbf{1}=1,\mathbf{w}\ge0}\operatorname{MMD}_{k,\alpha,\mathbf{X}_n}^2\left({\mathbf{w}}\right)\]
\end{lemma}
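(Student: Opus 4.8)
The plan is to recognise GKH (Algorithm~\ref{OSR}) as the Frank--Wolfe method applied to the convex quadratic functional $g(\mathbf{w}):=\operatorname{MMD}^2_{k,\alpha,\mathbf{X}_n}(\mathbf{w})=\mathbf{w}^{\top}\mathbf{K}_*\mathbf{w}$ over the probability simplex $\Delta:=\{\mathbf{w}\in\mathbb{R}^n:\mathbf{w}^{\top}\mathbf{1}=1,\ \mathbf{w}\ge\mathbf{0}\}$, with the uniform step sizes $\gamma_p=1/p$ induced by $\beta_p$ in the algorithm and the initialisation $\mathbf{w}_0=\mathbf{0}$ --- this identification, together with the convexity of $g$ (strict positive definiteness of $\mathbf{K}_*$), is exactly what was established in Appendix~\ref{secA}. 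Writing $\mathbf{w}_p=\tfrac1p\sum_{l=1}^p\mathbf{e}_{i^*_l}$ for the iterate after $p$ selections, we have $g(\mathbf{w}_p)=\operatorname{MMD}^2_{k,\alpha}(\mathbf{X}_{\mathcal{I}^*_p},\mathbf{X}_n)$ and $\min_{\Delta}g=M^2_{\alpha}$, so it suffices to bound $t_p:=g(\mathbf{w}_p)-M^2_{\alpha}\ge 0$.

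Next I would record the exact one-step descent inequality. Since $g$ is quadratic with Hessian $2\mathbf{K}_*$, the second-order Taylor expansion is exact, so for the Frank--Wolfe update $\mathbf{w}_{p+1}=(1-\gamma_{p+1})\mathbf{w}_p+\gamma_{p+1}\mathbf{s}_p$ with $\gamma_{p+1}=1/(p+1)$ and $\mathbf{s}_p=\mathbf{e}_{i^*_{p+1}}\in\arg\min_{\mathbf{s}\in\Delta}\langle\nabla g(\mathbf{w}_p),\mathbf{s}\rangle$,
\[
g(\mathbf{w}_{p+1})=g(\mathbf{w}_p)+\gamma_{p+1}\langle\nabla g(\mathbf{w}_p),\mathbf{s}_p-\mathbf{w}_p\rangle+\gamma_{p+1}^2\,(\mathbf{s}_p-\mathbf{w}_p)^{\top}\mathbf{K}_*(\mathbf{s}_p-\mathbf{w}_p).
\]
By optimality of $\mathbf{s}_p$ over $\Delta$ and convexity of $g$, $\langle\nabla g(\mathbf{w}_p),\mathbf{s}_p-\mathbf{w}_p\rangle\le\langle\nabla g(\mathbf{w}_p),\mathbf{w}^*-\mathbf{w}_p\rangle\le g(\mathbf{w}^*)-g(\mathbf{w}_p)$, where $\mathbf{w}^*\in\Delta$ attains $M^2_{\alpha}$. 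Hence $t_{p+1}\le(1-\gamma_{p+1})t_p+\gamma_{p+1}^2\,c_p$ with curvature term $c_p:=(\mathbf{s}_p-\mathbf{w}_p)^{\top}\mathbf{K}_*(\mathbf{s}_p-\mathbf{w}_p)$.

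The crux is to bound $c_p$ by $B=2K$. Because $\mathbf{1}^{\top}(\mathbf{s}_p-\mathbf{w}_p)=0$, the two rank-one corrections in $\mathbf{K}_*=\mathbf{K}-2\alpha\,\mathbf{p}\mathbf{1}^{\top}+\alpha^2\overline{K}\,\mathbf{1}\mathbf{1}^{\top}$ are annihilated, so $c_p=(\mathbf{s}_p-\mathbf{w}_p)^{\top}\mathbf{K}(\mathbf{s}_p-\mathbf{w}_p)$. I would then exploit the averaging representation $\mathbf{s}_p-\mathbf{w}_p=\tfrac1p\sum_{l=1}^p(\mathbf{e}_{i^*_{p+1}}-\mathbf{e}_{i^*_l})$: with $\phi$ the canonical feature map of $k$, this gives $c_p=\big\|\tfrac1p\sum_{l=1}^p(\phi(\mathbf{x}_{i^*_{p+1}})-\phi(\mathbf{x}_{i^*_l}))\big\|_{\mathcal{H}}^2$, and the triangle inequality together with $\|\phi(\mathbf{x})-\phi(\mathbf{y})\|_{\mathcal{H}}^2=k(\mathbf{x},\mathbf{x})+k(\mathbf{y},\mathbf{y})-2k(\mathbf{x},\mathbf{y})\le 2K$ for the nonnegative radial kernel used in RDSS yields $c_p\le 2K$. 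A parallel computation handles the base case: $t_1\le g(\mathbf{w}_1)=\|\phi(\mathbf{x}_{i^*_1})-\alpha\overline{\phi}\|_{\mathcal{H}}^2=k(\mathbf{x}_{i^*_1},\mathbf{x}_{i^*_1})-2\alpha\mu(\mathbf{x}_{i^*_1})+\alpha^2\overline{K}\le K+\overline{K}\le 2K$, using $0\le\alpha\le1$, $\mu\ge 0$ and $\overline{K}\le K$, where $\overline{\phi}=\tfrac1n\sum_l\phi(\mathbf{x}_l)$.

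Finally I would feed $t_1\le B$ and $t_{p+1}\le(1-\tfrac1{p+1})t_p+\tfrac{B}{(p+1)^2}$ into Lemma~\ref{lemma0} with $A=B$ and $\alpha_k=1/k$, which gives $t_m<B(2+\log m)/(m+1)$ for $m>1$ (the case $m=1$ being immediate), i.e.\ $g(\mathbf{w}_m)\le M^2_{\alpha}+B\tfrac{2+\log m}{m+1}$; unwinding $g(\mathbf{w}_m)=\operatorname{MMD}^2_{k,\alpha}(\mathbf{X}_{\mathcal{I}^*_m},\mathbf{X}_n)$ is the claim. \textbf{The main obstacle} is the curvature bound $c_p\le 2K$: the naive $\|\mathbf{s}_p-\mathbf{w}_p\|$-type estimate only delivers $c_p\le 4K$, and recovering the sharp constant $B=2K$ requires both the averaging representation of $\mathbf{s}_p-\mathbf{w}_p$ and the radiality/nonnegativity of $k$; by comparison, the Frank--Wolfe identification and the base-case estimate are routine.
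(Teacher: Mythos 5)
Your proposal is correct and follows essentially the same route the paper takes: Lemma~\ref{lemma1} is proved in the paper by deferring entirely to the kernel-herding rate analysis of \citet{pronzato2021performance}, which is exactly the Frank--Wolfe descent recursion over the simplex plus Lemma~\ref{lemma0} that you reconstruct (and which also mirrors the paper's explicit computation in the proof of Theorem~\ref{thm2}). Your write-up in fact supplies details the paper omits --- notably that the rank-one corrections in $\mathbf{K}_*$ are annihilated on the zero-sum direction $\mathbf{s}_p-\mathbf{w}_p$, and that the curvature constant $B=2K$ (rather than $4K$) relies on the nonnegativity of the radial kernel --- so there is nothing to fix.
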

\begin{proof}
    Following the notations in Appendix \ref{secA}, let $\mathbf{p}_{\alpha}=\alpha\mathbf{p}$, we could straightly follow the proof for finite-sample-size error bound of kernel herding with predefined step sizes given by~\citep{pronzato2021performance} to derive Lemma \ref{lemma1}, without any other technique. The detailed proof is omitted.
\end{proof}
\begin{lemma}
    \label{lemma2}
    Let $\mathcal{H}$ be an RKHS over $\mathcal{X}$ associated with positive definite kernel $k$, and $0\le \max_{\mathbf{x}\in\mathcal{X}}k(\mathbf{x},\mathbf{x})\le K$. Let $\mathbf{X}_m=\{\mathbf{x}_i\}_{i=1}^m$, $\mathbf{Y}_n=\{\mathbf{y}_j\}_{j=1}^m$, $\mathbf{x}_i,\mathbf{y}_j\in\mathcal{X}$. Then for any $\alpha\le1$,
    \begin{equation}
        |\operatorname{MMD}_{k,\alpha}(\mathbf{X}_m,\mathbf{Y}_n)-\operatorname{MMD}_k(\mathbf{X}_m,\mathbf{Y}_n)|\le(1-\alpha)\sqrt{K}\nonumber
    \end{equation}
\end{lemma}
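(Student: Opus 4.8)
The plan is to work from the variational (dual) characterisation of $\alpha$-MMD recorded in~(\ref{AMMD}). Exactly the computation leading to~(\ref{AMMD}) shows that, for any two finite point sets $\mathbf{X}_m=\{\mathbf{x}_i\}_{i=1}^m$ and $\mathbf{Y}_n=\{\mathbf{y}_j\}_{j=1}^n$,
\[\operatorname{MMD}_{k,\alpha}(\mathbf{X}_m,\mathbf{Y}_n)=\sup_{\|f\|_{\mathcal{H}}\le1}\left|\frac{1}{m}\sum_{i=1}^m f(\mathbf{x}_i)-\frac{\alpha}{n}\sum_{j=1}^n f(\mathbf{y}_j)\right|,\]
the case $\alpha=1$ recovering the ordinary $\operatorname{MMD}_k(\mathbf{X}_m,\mathbf{Y}_n)$. (Both quantities equal $\|\widehat{\mu}_{\mathbf{X}_m}-\alpha\widehat{\mu}_{\mathbf{Y}_n}\|_{\mathcal H}$ for the empirical mean embeddings, and the displayed sup form is just Riesz representation; nothing beyond~(\ref{AMMD}) is needed, except the routine remark that the derivation there did not use the inclusion $\mathbf{X}_{\mathcal{I}_m}\subset\mathbf{X}_n$.) So the first step is simply to record this dual form for both $\operatorname{MMD}_{k,\alpha}$ and $\operatorname{MMD}_k$ over the common index set $\{f:\|f\|_{\mathcal H}\le1\}$.

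Second, I would invoke the elementary inequality $|\sup_f A(f)-\sup_f B(f)|\le\sup_f|A(f)-B(f)|$, valid for any two bounded real-valued functions $A,B$ on a common domain. Applying it with $A(f)=\bigl|\tfrac1m\sum_i f(\mathbf{x}_i)-\tfrac{\alpha}{n}\sum_j f(\mathbf{y}_j)\bigr|$ and $B(f)=\bigl|\tfrac1m\sum_i f(\mathbf{x}_i)-\tfrac1n\sum_j f(\mathbf{y}_j)\bigr|$ reduces the claim to the uniform pointwise bound $|A(f)-B(f)|\le(1-\alpha)\sqrt{K}$ over the unit ball of $\mathcal{H}$.

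Third, for the pointwise bound the reverse triangle inequality gives $|A(f)-B(f)|\le\bigl|\bigl(\tfrac1m\sum_i f(\mathbf{x}_i)-\tfrac{\alpha}{n}\sum_j f(\mathbf{y}_j)\bigr)-\bigl(\tfrac1m\sum_i f(\mathbf{x}_i)-\tfrac1n\sum_j f(\mathbf{y}_j)\bigr)\bigr|=(1-\alpha)\bigl|\tfrac1n\sum_j f(\mathbf{y}_j)\bigr|$, where $\alpha\le1$ is used so that $1-\alpha\ge0$. Then the reproducing property $f(\mathbf{y})=\langle f,k(\cdot,\mathbf{y})\rangle_{\mathcal H}$ together with Cauchy--Schwarz yields $|f(\mathbf{y}_j)|\le\|f\|_{\mathcal H}\sqrt{k(\mathbf{y}_j,\mathbf{y}_j)}\le\sqrt{K}$ for each $j$, hence $\bigl|\tfrac1n\sum_j f(\mathbf{y}_j)\bigr|\le\sqrt{K}$. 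Combining the three estimates finishes the argument.

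I do not expect a genuine obstacle: the only point warranting a sentence of care is justifying the dual representation of $\operatorname{MMD}_{k,\alpha}$ for two \emph{distinct} point sets rather than the nested pair appearing in~(\ref{AMMD}), which is immediate from the mean-embedding viewpoint. Everything else is the $\sup$-difference inequality plus a one-line RKHS norm bound, so the write-up should be short.
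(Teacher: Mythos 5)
Your proposal is correct and follows essentially the same route as the paper's proof: both pass to the variational form $\operatorname{MMD}_{k,\alpha}(\mathbf{X}_m,\mathbf{Y}_n)=\sup_{\Vert f\Vert_{\mathcal{H}}\le1}\bigl(\frac{1}{m}\sum_i f(\mathbf{x}_i)-\frac{\alpha}{n}\sum_j f(\mathbf{y}_j)\bigr)$, apply the $\sup$-difference inequality to isolate the term $\frac{1-\alpha}{n}\sum_j f(\mathbf{y}_j)$, and bound it by $(1-\alpha)\sqrt{K}$ via the reproducing property and Cauchy--Schwarz. The only cosmetic difference is that you carry absolute values inside the supremum and hence insert an extra reverse-triangle-inequality step, which the paper avoids by writing the suprema without them; this changes nothing of substance.
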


\begin{proof}
    \begin{equation}
        \begin{aligned}
            & \left|\operatorname{MMD}_{k,\alpha}(\mathbf{X}_m,\mathbf{Y}_n)-\operatorname{MMD}_k(\mathbf{X}_m,\mathbf{Y}_n)\right| \\
            = & \left|\sup _{\Vert f\Vert_{\mathcal{H}}\leq1}\left(\frac{1}{m} \sum_{i=1}^m f\left(\mathbf{x}_i\right)-\frac{\alpha}{n} \sum_{j=1}^n f\left(\mathbf{y}_j\right)\right)-\sup _{\Vert f\Vert_{\mathcal{H}}\leq1}\left(\frac{1}{m} \sum_{i=1}^m f\left(\mathbf{x}_i\right)-\frac{1}{n} \sum_{j=1}^n f\left(\mathbf{y}_j\right)\right)\right| \\
            \leq & \sup _{\Vert f\Vert_{\mathcal{H}}\leq1}\left|\frac{1-\alpha}{n} \sum_{i=1}^n f\left(y_i\right)\right|=\left(\frac{1-\alpha}{n}\right)\sup _{\Vert f\Vert_{\mathcal{H}}\leq1}\left|\sum_{i=1}^n f\left(y_i\right)\right|\\
            = & \left(\frac{1-\alpha}{n}\right)\sup _{\Vert f\Vert_{\mathcal{H}}\leq1}\left|\sum_{j=1}^n \left\langle f,k(\cdot,\mathbf{y}_j)\right\rangle_{\mathcal{H}}\right|\leq\left(\frac{1-\alpha}{n}\right)\sup _{\Vert f\Vert_{\mathcal{H}}\leq1}\sum_{j=1}^n \left|\left\langle f,k(\cdot,\mathbf{y}_j)\right\rangle_{\mathcal{H}}\right|\\
            \leq & \left(\frac{1-\alpha}{n}\right)\sup _{\Vert f\Vert_{\mathcal{H}}\leq1}\sum_{j=1}^n  \Vert f\Vert_{\mathcal{H}}\Vert k(\cdot,\mathbf{y}_j)\Vert_{\mathcal{H}}\leq(1-\alpha)\sqrt{K}.\nonumber
        \end{aligned}
    \end{equation}
\end{proof}
\begin{lemma}[Proposition 12.31~\citep{wainwright2019high}]
    \label{lemma3}
    Suppose that $\mathcal{H}_1$ and $\mathcal{H}_2$ are reproducing kernel Hilbert spaces of real-valued functions with domains $\mathcal{X}_1$ and $\mathcal{X}_2$, and equipped with kernels $k_1$ and $k_2$, respectively. Then the tensor product space $\mathcal{H}=\mathcal{H}_1 \otimes \mathcal{H}_2$ is an RKHS of real-valued functions with domain $\mathcal{X}_1 \times \mathcal{X}_2$, and with kernel function
    \[k\left(\left(x_1, x_2\right),\left(x_1^{\prime}, x_2^{\prime}\right)\right)=k_1\left(x_1, x_1^{\prime}\right)k_2\left(x_2, x_2^{\prime}\right).\]
\end{lemma}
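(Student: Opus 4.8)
The plan is to verify directly that the product kernel $k((x_1,x_2),(x_1',x_2')) = k_1(x_1,x_1')k_2(x_2,x_2')$ is the reproducing kernel of the Hilbert-space tensor product $\mathcal{H} = \mathcal{H}_1 \otimes \mathcal{H}_2$, and then to invoke the Moore--Aronszajn correspondence (the uniqueness half of the result of \citep{aronszajn1950theory} recalled in Remark 1) to identify $\mathcal{H}$ as the RKHS of $k$. First I would recall the construction of $\mathcal{H}$: on the algebraic tensor product spanned by elementary tensors $f\otimes g$ with $f\in\mathcal{H}_1$, $g\in\mathcal{H}_2$, define the symmetric bilinear form $\langle f\otimes g,\, f'\otimes g'\rangle_{\mathcal{H}} := \langle f,f'\rangle_{\mathcal{H}_1}\langle g,g'\rangle_{\mathcal{H}_2}$ and extend bilinearly; $\mathcal{H}$ is its completion. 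Each algebraic tensor is realized as a function on $\mathcal{X}_1\times\mathcal{X}_2$ through the bilinear map $(f,g)\mapsto [\,(x_1,x_2)\mapsto f(x_1)g(x_2)\,]$, which by the universal property of the tensor product induces a well-defined linear map $\Phi$ from the algebraic tensor product into functions on $\mathcal{X}_1\times\mathcal{X}_2$.

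Next I would exhibit the candidate reproducing element. Setting $h_{(x_1,x_2)} := k_1(x_1,\cdot)\otimes k_2(x_2,\cdot)$, its norm is $\|k_1(x_1,\cdot)\|_{\mathcal{H}_1}\|k_2(x_2,\cdot)\|_{\mathcal{H}_2} = \sqrt{k_1(x_1,x_1)\,k_2(x_2,x_2)} < \infty$, so $h_{(x_1,x_2)}\in\mathcal{H}$, and under $\Phi$ it corresponds precisely to the function $(y_1,y_2)\mapsto k((x_1,x_2),(y_1,y_2))$. The reproducing property is then checked on elementary tensors using the reproducing properties of $k_1$ and $k_2$ separately, namely $\langle f\otimes g,\, h_{(x_1,x_2)}\rangle_{\mathcal{H}} = \langle f,k_1(x_1,\cdot)\rangle_{\mathcal{H}_1}\langle g,k_2(x_2,\cdot)\rangle_{\mathcal{H}_2} = f(x_1)g(x_2) = (f\otimes g)(x_1,x_2)$. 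This extends by linearity to the whole algebraic tensor product, and by Cauchy--Schwarz the evaluation functional $F\mapsto\langle F,\, h_{(x_1,x_2)}\rangle_{\mathcal{H}}$ is bounded with norm $\sqrt{k_1(x_1,x_1)\,k_2(x_2,x_2)}$; hence it extends continuously to all of $\mathcal{H}$, yielding $\langle F,\, h_{(x_1,x_2)}\rangle_{\mathcal{H}} = F(x_1,x_2)$ for every $F\in\mathcal{H}$.

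The boundedness of evaluation is exactly what justifies treating the abstract completion as a genuine space of functions: it guarantees that norm-Cauchy sequences of finite-rank tensors are pointwise Cauchy, so their limits are well-defined functions and $\Phi$ extends injectively to $\mathcal{H}$. With the reproducing property established and $k$ positive definite --- the product of two positive definite kernels is positive definite by the Schur product theorem --- $\mathcal{H}$ is an RKHS whose reproducing kernel is $k$, and uniqueness in the Moore--Aronszajn theorem identifies it as the RKHS of $k$, completing the proof.

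I expect the main obstacle to be the rigorous identification of the abstract Hilbert-space completion with an actual space of functions on $\mathcal{X}_1\times\mathcal{X}_2$: in particular, ensuring the bilinear form is genuinely positive definite (not merely positive semidefinite) on the algebraic tensor product, so that no nonzero function is collapsed to zero, and confirming that the evaluation functionals control pointwise limits along the completion. Once the boundedness of evaluation is in hand, the reproducing-kernel computation itself is routine.
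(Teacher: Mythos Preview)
The paper does not prove this lemma; it is quoted verbatim as Proposition~12.31 of \citep{wainwright2019high} and used as a black box in the proof of Theorem~\ref{thm4}. Your proposal is a correct and standard proof of the result: exhibit $k_1(x_1,\cdot)\otimes k_2(x_2,\cdot)$ as the reproducing element, verify the reproducing property on elementary tensors, extend by density using boundedness of evaluation, and appeal to Moore--Aronszajn uniqueness. The subtleties you flag (positive definiteness on the algebraic tensor product and the identification of the abstract completion with a function space) are exactly the points that require care, and you have the right tools for them.
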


\begin{lemma}[Theorem 5.7~\cite{paulsen2016introduction}]
    \label{lemma4}
    Let $f\in\mathcal{H}_1$ and $g\in\mathcal{H}_2$, where $\mathcal{H}_1,\mathcal{H}_2$ be two RKHS containing real-valued functions on $\mathcal{X}$, which is associated with positive definite kernel $k_1,k_2$ and canonical feature map $\phi_1,\phi_2$, then for any $x\in\mathcal{X}$,
    \[f(x)+g(x)=\left\langle f,\phi_1(x)\right\rangle_{\mathcal{H}_1}+\left\langle g,\phi_2(x)\right\rangle_{\mathcal{H}_2}=\left\langle f+g,(\phi_1+\phi_2)(x)\right\rangle_{\mathcal{H}_1+\mathcal{H}_2},\]
    where 
    \[\mathcal{H}_1+\mathcal{H}_2=\{f_1+f_2|f_i\in\mathcal{H}_i\}\]
    and $\phi_1+\phi_2$ is the canonical feature map of $\mathcal{H}_1+\mathcal{H}_2$. Furthermore,
    \[\Vert f+g\Vert_{\mathcal{H}_1+\mathcal{H}_2}^2\le\Vert f\Vert_{\mathcal{H}_1}^2+\Vert g\Vert_{\mathcal{H}_2}^2.\]
\end{lemma}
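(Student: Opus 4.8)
The plan is to establish Lemma~\ref{lemma4} via the standard ``pull-back along a bounded linear map'' realization of the sum of two reproducing kernel Hilbert spaces. First I would form the external direct sum $\mathcal{H}_1\oplus\mathcal{H}_2$, with $\Vert(f_1,f_2)\Vert^2=\Vert f_1\Vert_{\mathcal{H}_1}^2+\Vert f_2\Vert_{\mathcal{H}_2}^2$, and the linear ``summation'' map $L:\mathcal{H}_1\oplus\mathcal{H}_2\to\mathbb{R}^{\mathcal{X}}$, $L(f_1,f_2)=f_1+f_2$, whose range is precisely $\mathcal{H}_1+\mathcal{H}_2=\{f_1+f_2:f_i\in\mathcal{H}_i\}$ as a set of functions on $\mathcal{X}$. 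Since each evaluation $(f_1,f_2)\mapsto f_1(x)+f_2(x)$ is a bounded functional on $\mathcal{H}_1\oplus\mathcal{H}_2$ (reproducing property and Cauchy--Schwarz), $N:=\ker L=\bigcap_{x\in\mathcal{X}}\ker(e_x\circ L)$ is closed and $\mathcal{H}_1\oplus\mathcal{H}_2=N\oplus N^{\perp}$, where $N=\{(f,-f):f\in\mathcal{H}_1\cap\mathcal{H}_2\}$. I would then equip $\mathcal{H}:=\mathcal{H}_1+\mathcal{H}_2$ with $\Vert h\Vert_{\mathcal{H}}^2:=\min\{\Vert u\Vert^2:Lu=h\}$ and check that this minimum is attained at $P_{N^{\perp}}u$ for any preimage $u$, that $h\mapsto P_{N^{\perp}}u$ is a well-defined linear isometry of $\mathcal{H}$ onto $N^{\perp}$, and hence that $\mathcal{H}$ is a Hilbert space whose inner product is the one transported from $N^{\perp}$.

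Second, I would pin down the reproducing kernel. Pulling $e_x$ back to $\mathcal{H}_1\oplus\mathcal{H}_2$, the identity $f_1(x)+f_2(x)=\langle f_1,k_1(\cdot,x)\rangle_{\mathcal{H}_1}+\langle f_2,k_2(\cdot,x)\rangle_{\mathcal{H}_2}=\langle(f_1,f_2),(k_1(\cdot,x),k_2(\cdot,x))\rangle$ shows $e_x\circ L$ is bounded with Riesz representer $(k_1(\cdot,x),k_2(\cdot,x))$, which moreover lies in $N^{\perp}$ (its pairing with any $(f,-f)\in N$ is $f(x)-f(x)=0$). Transporting along the isometry, $e_x$ is bounded on $\mathcal{H}$ with representer $k(\cdot,x):=L(k_1(\cdot,x),k_2(\cdot,x))=k_1(\cdot,x)+k_2(\cdot,x)=\phi_1(x)+\phi_2(x)$. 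Hence $\mathcal{H}$ is an RKHS with reproducing kernel $k=k_1+k_2$ and canonical feature map $x\mapsto\phi_1(x)+\phi_2(x)=:(\phi_1+\phi_2)(x)$.

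The two displayed identities are then immediate: the first, $f(x)+g(x)=\langle f,\phi_1(x)\rangle_{\mathcal{H}_1}+\langle g,\phi_2(x)\rangle_{\mathcal{H}_2}$, is just the reproducing property in $\mathcal{H}_1$ and $\mathcal{H}_2$; the second, $f(x)+g(x)=\langle f+g,(\phi_1+\phi_2)(x)\rangle_{\mathcal{H}_1+\mathcal{H}_2}$, is the reproducing property in $\mathcal{H}$ applied to $f+g\in\mathcal{H}$, using $k(\cdot,x)=(\phi_1+\phi_2)(x)$. The norm inequality follows at once from the variational definition of $\Vert\cdot\Vert_{\mathcal{H}}$: $(f,g)$ is one admissible decomposition of $f+g$, so $\Vert f+g\Vert_{\mathcal{H}_1+\mathcal{H}_2}^2=\min_{Lu=f+g}\Vert u\Vert^2\le\Vert(f,g)\Vert^2=\Vert f\Vert_{\mathcal{H}_1}^2+\Vert g\Vert_{\mathcal{H}_2}^2$, with equality exactly when $(f,g)\in N^{\perp}$.

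The step I expect to require real care is the first one: showing that the inf-norm on the range of $L$ is a genuine Hilbert norm (the ``pull-back lemma''). One must verify that the minimum over preimages is attained and equals $\Vert P_{N^{\perp}}u\Vert$ regardless of which preimage $u$ is chosen, that this quantity satisfies the parallelogram law (so it is induced by an inner product), and that completeness passes from $N^{\perp}$ to $\mathcal{H}$ through the isometry. All of this hinges on $\ker L$ being closed, which is exactly why one first records that point evaluations are continuous on $\mathcal{H}_1\oplus\mathcal{H}_2$. Granting this, the remaining steps are routine manipulations with the reproducing property.
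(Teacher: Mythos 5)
The paper does not prove this lemma at all --- it is quoted as Theorem 5.7 of Paulsen--Raghupathi and used as a black box. Your argument is correct and is essentially the textbook proof of that theorem: the pull-back of the RKHS structure along the summation map $L(f_1,f_2)=f_1+f_2$ on the external direct sum, with the quotient/minimal-norm identification of $\mathcal{H}_1+\mathcal{H}_2$ with $(\ker L)^{\perp}$ and the observation that the Riesz representer $(k_1(\cdot,x),k_2(\cdot,x))$ lies in $(\ker L)^{\perp}$, so nothing further is needed.
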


\begin{lemma}
    \label{lemma5}
    For any unlabeled dataset $\mathbf{X}_n\subset\mathcal{X}$ and any subset $\mathbf{X}_{\mathcal{I}_m}$,
    \[\operatorname{MMD}_{k,\alpha}^2(\mathbf{X}_n,\mathbf{X}_n)=(1-\alpha)^2\overline{K},\operatorname{MMD}_{k,\alpha}^2(\mathbf{X}_{\mathcal{I}_m},\mathbf{X}_n)\le(1+\alpha^2)K,\]
    where $\overline{K}=\sum_{i=1}^n\sum_{j=1}^nk(\mathbf{x}_i,\mathbf{x}_j)/n^2$, $K=\max_{\mathbf{x}\in\mathcal{X}}k(\mathbf{x},\mathbf{x})$.
\end{lemma}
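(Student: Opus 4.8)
The plan is to prove the two claims separately, both by direct manipulation of the closed form of $\alpha$-MMD given in~(\ref{AMMD}):
\[\operatorname{MMD}_{k,\alpha}^2(\mathbf{X}_{\mathcal{I}_m},\mathbf{X}_n)=\frac{\alpha^2}{n^2}\sum_{i=1}^n\sum_{j=1}^n k(\mathbf{x}_i,\mathbf{x}_j)+\frac{1}{m^2}\sum_{i\in\mathcal{I}_m}\sum_{j\in\mathcal{I}_m}k(\mathbf{x}_i,\mathbf{x}_j)-\frac{2\alpha}{mn}\sum_{i=1}^n\sum_{j\in\mathcal{I}_m}k(\mathbf{x}_i,\mathbf{x}_j).\]
For the first identity I would set $\mathcal{I}_m=[n]$, so that $m=n$ and every double sum above equals $\sum_{i,j=1}^n k(\mathbf{x}_i,\mathbf{x}_j)=n^2\overline{K}$; the three coefficients then add to $\alpha^2+1-2\alpha=(1-\alpha)^2$, which gives $\operatorname{MMD}_{k,\alpha}^2(\mathbf{X}_n,\mathbf{X}_n)=(1-\alpha)^2\overline{K}$.

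For the inequality I would bound the three terms of the closed form one at a time. Positive definiteness of $k$ together with $K=\max_{\mathbf{x}\in\mathcal{X}}k(\mathbf{x},\mathbf{x})$ yields, by the Cauchy--Schwarz inequality for the kernel, $k(\mathbf{x}_i,\mathbf{x}_j)\le\sqrt{k(\mathbf{x}_i,\mathbf{x}_i)\,k(\mathbf{x}_j,\mathbf{x}_j)}\le K$ for all $i,j$; hence $\frac{1}{n^2}\sum_{i,j}k(\mathbf{x}_i,\mathbf{x}_j)\le K$ and $\frac{1}{m^2}\sum_{i,j\in\mathcal{I}_m}k(\mathbf{x}_i,\mathbf{x}_j)\le K$, so the first two terms are at most $\alpha^2 K$ and $K$. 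For the last term I would use that the kernel is radial and hence nonnegative (this is exactly the Gaussian kernel adopted in RDSS and the radial assumption in Algorithm~\ref{OS}) and that $0\le\alpha\le1$, so that $-\frac{2\alpha}{mn}\sum_{i,j}k(\mathbf{x}_i,\mathbf{x}_j)\le 0$ and may be discarded. Adding up gives $\operatorname{MMD}_{k,\alpha}^2(\mathbf{X}_{\mathcal{I}_m},\mathbf{X}_n)\le(1+\alpha^2)K$. Equivalently, writing $\mu_{\mathcal{I}_m}=\frac1m\sum_{i\in\mathcal{I}_m}\phi(\mathbf{x}_i)$ and $\mu_n=\frac1n\sum_{j}\phi(\mathbf{x}_j)$ for the canonical feature map $\phi$ of $\mathcal{H}$, one has $\operatorname{MMD}_{k,\alpha}^2=\|\mu_{\mathcal{I}_m}-\alpha\mu_n\|_{\mathcal{H}}^2=\|\mu_{\mathcal{I}_m}\|^2-2\alpha\langle\mu_{\mathcal{I}_m},\mu_n\rangle+\alpha^2\|\mu_n\|^2$, and then $\|\mu_{\mathcal{I}_m}\|,\|\mu_n\|\le\sqrt{K}$ together with $\langle\mu_{\mathcal{I}_m},\mu_n\rangle\ge0$ finishes it.

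There is no genuine obstacle here: both statements reduce to elementary algebra once the closed form~(\ref{AMMD}) is in hand. The only place that needs care is the bookkeeping of the hypotheses on $k$ --- positive definiteness and the diagonal bound $K$ control the two ``like'' terms, but discarding the cross term in the inequality really does use nonnegativity of $k$ (guaranteed here by the radial/Gaussian choice) together with $\alpha\ge 0$; without nonnegativity one would only obtain the weaker bound $(1+\alpha)^2K$ via $|\langle\mu_{\mathcal{I}_m},\mu_n\rangle|\le K$.
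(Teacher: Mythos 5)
Your proposal is correct and matches the paper's (unwritten) argument: the paper simply asserts that Lemma~\ref{lemma5} is ``directly derived from the definition of $\alpha$-MMD,'' and your direct expansion of the closed form in~(\ref{AMMD}) is exactly that computation carried out in full. Your remark that discarding the cross term genuinely uses pointwise nonnegativity of the kernel (supplied here by the radial/Gaussian choice) is a detail the paper leaves implicit, and is correctly handled.
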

Lemma \ref{lemma5} is directly derived from the definition of $\alpha$-MMD.
\section{Proof of Theorems}
\begin{proof}[Proof for Theorem \ref{thm4}]
\label{Proof1}
    The proof borrows the technique introduced in~\citep{shalit2017estimating} for decomposing the expected risk of hypotheses.
		
  Firstly, let us denote that $\mathcal{H}_4=\mathcal{H}_1\otimes\mathcal{H}_1+\mathcal{H}_1\otimes\mathcal{H}_2+\mathcal{H}_3$, with kernel $k_4=k_1^2+k_1k_2+k_3$ and canonical feature map $\phi_4=\phi_1\otimes\phi_1+\phi_1\otimes\phi_2+\phi_3$.
		
		Under the assumptions in Theorem \ref{thm4}, according to Theorem 4 in \cite{song2009hilbert}, we have for any $\mathbf{x}\in\mathcal{X}$,
		\[h(\mathbf{x})=\left\langle h,\phi_1(\mathbf{x})\right\rangle_{\mathcal{H}_1},\mathbb{E}[Y|\mathbf{x}]=\left\langle \mathbb{E}[Y|X],\phi_2(\mathbf{x})\right\rangle_{\mathcal{H}_2},\]
  \[\operatorname{Var}(Y|\mathbf{x})=\left\langle \operatorname{Var}(Y|X),\phi_3(\mathbf{x})\right\rangle_{\mathcal{H}_3}\]
    where $\phi_1,\phi_2,\phi_3$ are canonical feature maps in $\mathcal{H}_1,\mathcal{H}_2,\mathcal{H}_3$. Denote that $m=\mathbb{E}[Y|X]$ and $s=\operatorname{Var}(Y|X)$. Now by definition,
		\[R(h)=\mathbb{E}\left[\ell(h(\mathbf{x}),y)\right]=\int_{\mathcal{X}}\int_{\mathcal{Y}}\ell(h(\mathbf{x}),y)p(y|\mathbf{x})p(\mathbf{x})d\mathbf{x}dy=\int_{\mathcal{X}}f(\mathbf{x})p(\mathbf{x})d\mathbf{x}\]
		where
		\begin{equation}
			\begin{aligned}
				f(x)&=\int_{\mathcal{Y}}(y-h(\mathbf{x}))^2p(y|\mathbf{x})dy\\
				&=\operatorname{Var}(Y|\mathbf{x})-2h(\mathbf{x})\mathbb{E}[Y|\mathbf{x}]+h^2(\mathbf{x})\\
				&=\left\langle s,\phi_3(\mathbf{x})\right\rangle_{\mathcal{H}_3}-2\left\langle h,\phi_1(\mathbf{x})\right\rangle_{\mathcal{H}_1}\left\langle m,\phi_2(\mathbf{x})\right\rangle_{\mathcal{H}_2}+\left\langle h,\phi_1(\mathbf{x})\right\rangle_{\mathcal{H}_1}\left\langle h,\phi_1(\mathbf{x})\right\rangle_{\mathcal{H}_1}\\
				&=\left\langle s,\phi_3(\mathbf{x})\right\rangle_{\mathcal{H}_3}-\left\langle 2h\otimes m,(\phi_1\otimes \phi_2)(\mathbf{x})\right\rangle_{\mathcal{H}_1\otimes\mathcal{H}_2}+\left\langle h\otimes h,(\phi_1\otimes \phi_1)(\mathbf{x})\right\rangle_{\mathcal{H}_1\otimes\mathcal{H}_1}\\
				&=\left\langle s-2h\otimes m+h\otimes h,\phi_4(x)\right\rangle_{\mathcal{H}_4}\\\nonumber
			\end{aligned}
		\end{equation}
		where the fourth equality holds by Lemma \ref{lemma3} and the last equality holds by Lemma \ref{lemma4}, then $f\in\mathcal{H}_4$, and
		\begin{equation}
			\begin{aligned}
				\Vert f\Vert_{\mathcal{H}_4}&=\Vert s-2h\otimes m+h\otimes h\Vert_{\mathcal{H}_4}\\
				&\le\Vert s\Vert_{\mathcal{H}_4}+\Vert 2h\otimes m\Vert_{\mathcal{H}_4}+\Vert h\otimes h\Vert_{\mathcal{H}_4}\\
				&\le\Vert s\Vert_{\mathcal{H}_3}+2\Vert m\Vert_{\mathcal{H}_2}\Vert h\Vert_{\mathcal{H}_1}+\Vert h\otimes h\Vert_{\mathcal{H}_1\otimes\mathcal{H}_1}\\
				&=\Vert s\Vert_{\mathcal{H}_3}+2\Vert m\Vert_{\mathcal{H}_2}\Vert h\Vert_{\mathcal{H}_1}+\Vert h\Vert_{\mathcal{H}_1}^2\\
				&\le K_h^2+2K_hK_m+K_s\nonumber
			\end{aligned}
		\end{equation}
		where the second inequality holds by Lemma \ref{lemma4}. Therefore, let $\beta=1/(K_h^2+2K_hK_m+K_s)$ we have $\Vert\beta f\Vert_{\mathcal{H}_4}=\beta\Vert f\Vert_{\mathcal{H}_4}\le 1$. Then
		\begin{equation}
			\begin{aligned}
				&\left|\widehat{R}_T(h)-\widehat{R}_S(h)\right|\\
				=&\left|\int_{\mathcal{X}}f(\mathbf{x})dP_T(\mathbf{x})-\int_{\mathcal{X}}f(\mathbf{x})dP_S(\mathbf{x})\right|\\
				=&(K_h^2+2K_hK_m+K_s)\left|\int_{\mathcal{X}}\beta f(\mathbf{x})dP_T(\mathbf{x})-\int_{\mathcal{X}}\beta f(\mathbf{x})dP_S(\mathbf{x})\right|\\
				\le&(K_h^2+2K_hK_m+K_s)\sup_{\Vert f\Vert_{\mathcal{H}_4}\leq1}\left|\int_{\mathcal{X}}f(\mathbf{x})dP_T(\mathbf{x})-\int_{\mathcal{X}}f(\mathbf{x})dP_S(\mathbf{x})\right|\\
				=&(K_h^2+2K_hK_m+K_s)\operatorname{MMD}_{k_4}(\mathbf{X}_S,\mathbf{X}_T)\nonumber
			\end{aligned}
		\end{equation}
    where $P_T$ denotes the empirical distribution constructed by $\mathbf{X}_T$, so does $P_S$. Recall Lemma \ref{lemma2}, we have Theorem \ref{thm4}.
	\end{proof}
\begin{proof}[Proof for Theorem \ref{thm2}]
\label{Proof2}
    Following the notations in Appendix \ref{secA}, we further define
    \begin{equation}
        \label{Malpha}
        \mathbf{w}_*=\mathbf{1}/n,C^2_{\alpha}=\operatorname{MMD}^2_{k,\alpha,\mathbf{X}_n}(\mathbf{w}_*)=(1-\alpha)^2\overline{K}
    \end{equation} \[\widehat{\mathbf{w}}=\mathop{\arg\min}_{\mathbf{1}^{\top}\mathbf{w}=1}\operatorname{MMD}^2_{k,\alpha,\mathbf{X}_n}(\mathbf{w})=\alpha\left(\mathbf{K}^{-1}-\frac{\mathbf{K}^{-1} \mathbf{1} \mathbf{1}^{\top} \mathbf{K}^{-1}}{\mathbf{1}^{\top} \mathbf{K}^{-1} \mathbf{1}}\right) \mathbf{p}+\frac{\mathbf{K}^{-1} \mathbf{1}}{\mathbf{1}^{\top} \mathbf{K}^{-1} \mathbf{1}}\]
    Let $\mathbf{p}_{\alpha}=\alpha\mathbf{p}$, we have $(\mathbf{p}_{\alpha}-\mathbf{K}\widehat{\mathbf{w}})\propto\mathbf{1}$. Define
    \[\Delta_{\alpha}(\mathbf{w}):=\operatorname{MMD}^2_{k,\alpha,\mathbf{X}_n}(\mathbf{w})-C_{\alpha}^2=\widehat{g}(\mathbf{w})-\widehat{g}(\mathbf{w}_*)\]
    where $\widehat{g}(\mathbf{w})=\left(\mathbf{w}-\widehat{\mathbf{w}}\right)^{\top} \mathbf{K}\left(\mathbf{w}-\widehat{\mathbf{w}}\right)$. The related details for proving the equality are omitted, since they are completely given by the proof of alternative expression of MMD in \citet{pronzato2021performance}. By the convexity of $\widehat{g}(\cdot)$, for $j=\mathop{\arg\min}_{i\in[n]\backslash\mathcal{I}^*_p}f_{\mathcal{I}^*_p}(\mathbf{x}_i)$,
    \[\widehat{g}\left(\mathbf{w}_*\right) \geq \widehat{g}\left(\mathbf{w}_p\right)+2\left(\mathbf{w}_*-\mathbf{w}_p\right)^{\top} \mathbf{K}\left(\mathbf{w}_p-\widehat{\mathbf{w}}\right) \geq \widehat{g}\left(\mathbf{w}_p\right)+2\min _{j \in [n]\backslash\mathcal{I}^*_p}\left(\mathbf{e}_j-\mathbf{w}_p\right)^{\top} \mathbf{K}\left(\mathbf{w}_p-\widehat{\mathbf{w}}\right)\]
    where the second inequality holds with the assumption in Theorem \ref{thm2}
    \begin{equation}
        \begin{aligned}
            \left(\mathbf{w}_*-\mathbf{e}_j\right)^{\top} \mathbf{K}\left(\mathbf{w}_p-\widehat{\mathbf{w}}\right)&=\left(\mathbf{w}_*-\mathbf{e}_j\right)^{\top} \left(\mathbf{K}\mathbf{w}_p-\mathbf{p}_{\alpha}\right)\\
            &=\frac{\sum_{i=1}^nf_{\mathcal{I}^*_p}(\mathbf{x}_i)}{n}-f_{\mathcal{I}^*_p}(\mathbf{x}_{j_{p+1}})\ge\frac{\sum_{i=1}^nf_{\mathcal{I}^*_p}(\mathbf{x}_i)}{n}-f_{\mathcal{I}^*_p}(\mathbf{x}_{j})\ge0\nonumber
        \end{aligned}
    \end{equation}
    therefore, we have for $B=2K$,
    \begin{equation}
    \begin{aligned}
        &\Delta_{\alpha}(\mathbf{w}_{p+1})\\
        =&\widehat{g}\left(\mathbf{w}_p\right)-\widehat{g}\left(\mathbf{w}_*\right)+\frac{2}{p+1}\left(\mathbf{e}_j-\mathbf{w}_p\right)^{\top} \mathbf{K}\left(\mathbf{w}_p-\widehat{\mathbf{w}}\right)+\frac{1}{(p+1)^2}\left(\mathbf{e}_j-\mathbf{w}_p\right)^{\top} \mathbf{K}\left(\mathbf{e}_j-\mathbf{w}_p\right)\\
        =&\frac{p}{p+1}(\widehat{g}\left(\mathbf{w}_p\right)-\widehat{g}\left(\mathbf{w}_*\right))+\frac{1}{(p+1)^2}B=\frac{p}{p+1}\Delta_{\alpha}(\mathbf{w}_{p})+\frac{1}{(p+1)^2}B\\
    \end{aligned}
    \end{equation}
    where $\mathbf{w}_{p+1}=p\mathbf{w}_p/(p+1)+\mathbf{e}_j/(p+1)$, and obviously $B$ upper bounds $\left(\mathbf{e}_j-\mathbf{w}_p\right)^{\top} \mathbf{K}\left(\mathbf{e}_j-\mathbf{w}_p\right)$. Since $\alpha\le1$, it holds from Lemma \ref{lemma5} that
    \[\Delta_{\alpha}(\mathbf{w}_{1})\le \operatorname{MMD}^2_{k,\alpha,\mathbf{X}_n}(\mathbf{w}_1)\le (1+\alpha^2)K\le B\]
    therefore by Lemma \ref{lemma0}, we have \[\operatorname{MMD}_{k,\alpha}^2(\mathbf{X}_{\mathcal{I}^*_m},\mathbf{X}_n)=\operatorname{MMD}_{k,\alpha,\mathbf{X}_n}^2(\mathbf{w}_p)\le C_{\alpha}^2+B\frac{2+\log m}{m+1}\]
\end{proof}

\section{Additional Experimental Details and Results}

\subsection{Supplementary Numerical Experiments on GKHR}
\label{secD1}
Consider the fact that GKH is a convergent algorithm (Lemma \ref{lemma1}) and the finite-sample-size error bound (\ref{fsseb-GKH}) holds without any assumption on the data, we conduct some numerical experiments to empirically compare GKHR with GKH on datasets generated by four different distributions on $\mathbb{R}^2$. 

Firstly, we define four distributions on $\mathbb{R}^2$:
\begin{enumerate}
    \item Gaussian mixture model 1 which consists of four Gaussian distributions $G_1,G_2,G_3,G_4$ with mixture weights $[0.95,0.01,0.02,0.02]$,
    \item Gaussian mixture model 2 which consists of four Gaussian distributions $G_1,G_2,G_3,G_4$ with mixture weights $[0.3,0.2,0.15,0.35]$,
    \item Uniform distribution 1 which consists of a uniform distribution defined in a circle with radius $0.5$, and a uniform distribution defined in a annulus with inner radius $4$ and outer radius $6$,
    \item Uniform distribution 2 defined on $[-10,10]^2$.
\end{enumerate}
where
\[G_1=\mathcal{N}\left(\begin{bmatrix}
    1 \\
    2
\end{bmatrix},\begin{bmatrix}
    2 & 0 \\
    0 & 5
\end{bmatrix}\right),
G_2=\mathcal{N}\left(\begin{bmatrix}
    -3 \\
    -5
\end{bmatrix},\begin{bmatrix}
    1 & 0 \\
    0 & 2
\end{bmatrix}\right)\]
\[G_3=\mathcal{N}\left(\begin{bmatrix}
    -5 \\
    4
\end{bmatrix},\begin{bmatrix}
    8 & 0 \\
    0 & 6
\end{bmatrix}\right),
G_4=\mathcal{N}\left(\begin{bmatrix}
    15 \\
    10
\end{bmatrix},\begin{bmatrix}
    4 & 0 \\
    0 & 9
\end{bmatrix}\right)\]

\begin{figure}[htbp]
    \centering
    \includegraphics[width=0.6\linewidth, height=1.2\linewidth]{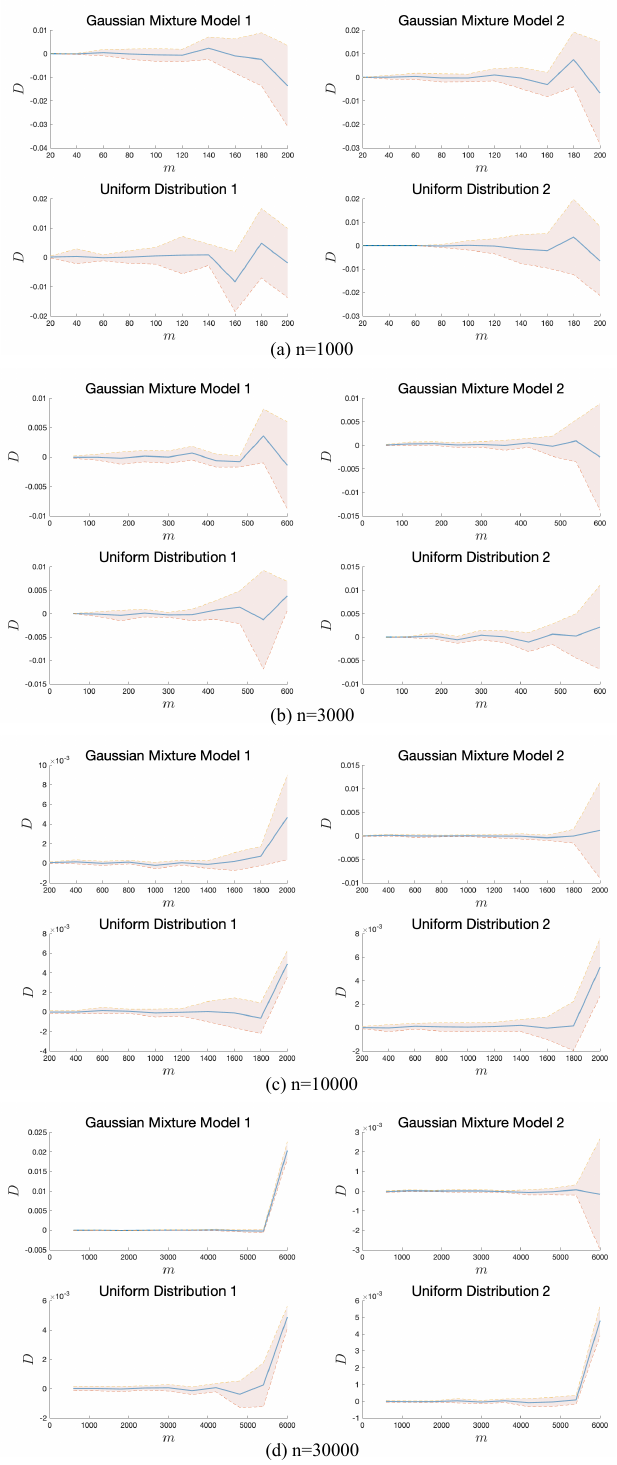}
    \caption{The performance comparison between GKHR and GKH with different $m,n$ over ten independent runs. The blue line is the mean value of $D$, the red dotted line over (under) the blue line is the mean value of $D$ plus (minus) its standard deviation, and the pink area is the area between the upper and lower red dotted lines.}
    \label{numerical}
\end{figure}

To consistently evaluate the performance gap between GKHR and GKH at the same order of magnitude, we propose the following criterion
\[D=\frac{D_1-D_2}{D_1+D_2}\]
where $D_1=\operatorname{MMD}^2_{k,\alpha}(\mathbf{X}^{(1)}_{\mathcal{I}^*_m},\mathbf{X}_n),D_2=\operatorname{MMD}^2_{k,\alpha}(\mathbf{X}^{(2)}_{\mathcal{I}^*_m},\mathbf{X}_n)$, $\mathbf{X}^{(1)}_{\mathcal{I}_m}$ is the selected samples from GKHR and $\mathbf{X}^{(2)}_{\mathcal{I}_m}$ is the selected samples from GKH. Positive value of $D$ implies that GKH outperforms GKHR, and negative values of $D$ implies that GKHR outperforms GKH. Large absolute value of $D$ shows large performance gap.

The experiments are conducted as follows. We generate 1000,3000,10000,30000 random samples from the four distributions separately, then use GKHR and GKH for sample selection under the low-budget setting, \textit{i.e.,} $m/n\le 0.2$. The $\alpha$ is set by $m/n$. We report the results over ten independent runs in Figure~\ref{numerical}, which shows that although the performance gap tends to grow as $m$ grows, when $m$ is relatively small, the performance of GKHR is similar to that of GKH. Therefore, under the low-budget setting, GKHR and GKH have similar performance on minimizing $\alpha$-MMD over various type of distributions, which convinces us that GKHR could work well in the sample selection task.

\subsection{Datasets}
\label{data}
For experiments, we choose five common datasets: CIFAR-10/100, SVHN, STL-10 and ImageNet.
CIFAR-10 and CIFAR-100 contain 60,000 images with 10 and 100 categories, respectively, among which 50,000 images are for training, and 10,000 images are for testing;
SVHN contains 73,257 images for training and 26,032 images for testing;
STL-10 contains 5,000 images for training, 8,000 images for testing and 100,000 unlabeled images as extra training data.
ImageNet spans 1,000 object classes and contains 1,281,167 training and 100,000 test images.
The training sets of the above datasets are considered as the unlabeled dataset for sample selection.

\subsection{Visualization of Selected Samples}
\label{voss}
To offer a more intuitive comparison between various sampling methods, we visualized samples chosen by stratified, random, $k$-Means, USL, ActiveFT and RDSS (ours).
We generate 5000 samples from a Gaussian mixture model defined on $\mathbb{R}^2$ with 10 components and uniform mixture weights. One hundred samples are selected from the entire dataset using different sampling methods. 
The visualisation results in Figure~\ref{vis} indicate that our selected samples distribute more similarly with the entire dataset than other counterparts.

\begin{figure}[htbp]
    \centering
    \includegraphics[width=1\linewidth]{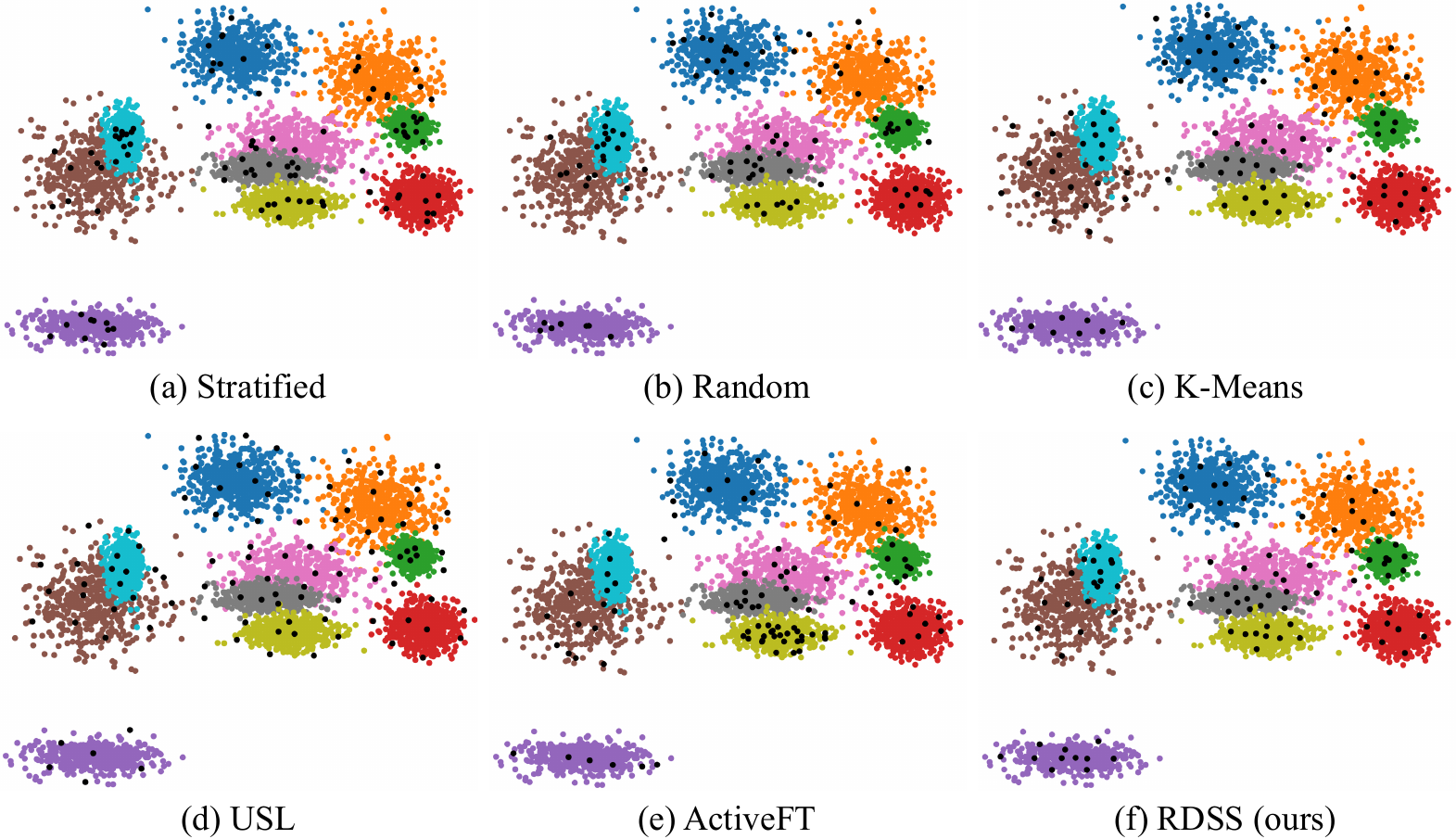}
    \caption{Visualization of selected samples using different sampling methods. Points of different colours represent samples from different classes, while black points indicate the selected samples.}
    \label{vis}
\end{figure}

\subsection{Computational Complexity and Running Time}
\label{time}
We compute the time complexity of various sampling methods and recorded the time required to select 400 samples on the CIFAR-100 dataset for each method. The results are presented in Table~\ref{eff}, where $m$ represents the annotation budget, $n$ denotes the total number of samples, and $T$ indicates the number of iterations.
The sampling time was obtained by averaging the duration of three independent runs of the sampling code on an idle server without any workload. As illustrated by the results, the sampling efficiency of our method surpasses that of all other methods except for random and stratified sampling. This discrepancy is likely because the execution time of other algorithms is affected by the number of iterations $T$.

\begin{table*}[htbp]
\centering
\caption{Efficiency comparison with other sampling methods.}
\label{eff}
\begin{tabular}{l|cc}
\toprule
Method   & Time complexity   & Time (s) \\
\cmidrule{1-3}
Random       & $O(n)$            & $\approx 0$ \\
Stratified   & $O(n)$            & $\approx 0$ \\
$k$-means    & $O(mnT)$          & $579.97$    \\
USL          & $O(mnT)$          & $257.68$    \\
ActiveFT     & $O(mnT)$          & $224.35$    \\
\cmidrule{1-3}
RDSS (Ours)   & $O(mn)$           & $132.77$    \\
\bottomrule
\end{tabular}
\end{table*}

\subsection{Implementation Details of Supervised Learning Experiments}
\label{idosle}
We use ResNet-18~\citep{he2016deep} as the classification model for all AL approaches and our method. Specifically, 
We train the models for $300$ epochs using SGD optimizer (initial learning rate=$0.1$, weight decay=$5e-4$, momentum=$0.9$) with batch size $128$. Finally, we evaluate the performance with the Top-1 classification accuracy metric on the test set.

\subsection{Direct Comparison with AL/SSAL}
\label{crwasm}

The comparative results with AL/SSAL approaches are shown in Figure~\ref{cwasmoc10} and Figure~\ref{cwasmoc100}, respectively.
The specific values corresponding to the comparative results in the above two figures are shown in Table~\ref{alc}.
And the above results are from ~\citep{cho2022mcdal}, ~\citep{gao2020consistency} and ~\citep{huang2021semi}.

\begin{table*}[htbp]
\caption{Comparative results with AL/SSAL approaches.}
\label{alc}
\resizebox{\linewidth}{!}{
\begin{tabular}{l|ccccccccc|ccccc}
\toprule
Dataset
& \multicolumn{9}{c|}{CIFAR-10}  & \multicolumn{5}{c}{CIFAR-100}   \\
\cmidrule{1-15}
Budget
& 40    & 250    & 500    & 1000   & 2000   & 4000   & 5000   & 7500  & 10000
& 400   & 2500   & 5000   & 7500   & 10000 \\
\hline
\multicolumn{15}{l}{\cellcolor[HTML]{C0C0C0}\textit{Active Learning (AL)}}        \\
CoreSet~\citep{sener2018active}     & -   & -   & -       & -       & -       & -   & 80.56   & 85.46   & 87.56   
                                    & -   & -   & 37.36   & 47.17   & 53.06   \\
VAAL~\citep{sinha2019variational}   & -   & -   & -       & -       & -       & -   & 81.02   & 86.82   & 88.97
                                    & -   & -   & 38.46   & 47.02   & 53.99   \\
LearnLoss~\citep{yoo2019learning}   & -   & -   & -       & -       & -       & -   & 81.74   & 85.49   & 87.06
                                    & -   & -   & 36.12   & 47.81   & 54.02   \\
MCDAL~\citep{cho2022mcdal}          & -   & -   & -       & -       & -       & -   & 81.01   & 87.24   & 89.40 
                                    & -   & -   & 38.90   & 49.34   & 54.14   \\
\hline
\multicolumn{15}{l}{\cellcolor[HTML]{C0C0C0}\textit{Semi-Supervised Active Learning (SSAL)}}   \\
CoreSetSSL~\citep{sener2018active}   & -   & -   & 90.94   & 92.34   & 93.30   & 94.02   & -       & -       & -
                                     & -   & -   & 63.14   & 66.29   & 68.63   \\
CBSSAL~\citep{gao2020consistency}    & -   & -   & 91.84   & 92.93   & 93.78   & 94.55   & -       & -       & -
                                     & -   & -   & 63.73   & 67.14   & 69.34   \\
TOD-Semi~\citep{huang2021semi}       & -   & -   & -       & -       & -       & -       & 79.54   & 87.82   & 90.3
                                     & -   & -   & 36.97   & 52.87   & 58.64   \\
\multicolumn{15}{l}{\cellcolor[HTML]{C0C0C0}\textit{Semi-Supervised Learning (SSL) with RDSS}}   \\
FlexMatch+RDSS (Ours)   & 94.69   & 95.21   & -   & -   & -       & 95.71   & -   & -   & -
                       & 48.12   & 67.27   & -   & -   & 73.21   \\
\hline
FreeMatch+RDSS (Ours)   & 95.05   & 95.50   & -   & -   & -       & 95.98   & -   & -   & -
                       & 48.41   & 67.40   & -   & -   & 73.13  \\
\bottomrule
\end{tabular}
}
\end{table*}

According to the results, we have several observations:
(1) AL approaches often necessitate significantly larger labelling budgets, exceeding RDSS by 125 or more on CIFAR-10. This is primarily because AL paradigms are solely dependent on labelled samples not only for classification but also for feature learning.
(2) SSAL and our methods leverage unlabeled samples, surpassing traditional AL approaches.
However, this may not directly reflect the advantages of RDSS, as such performance enhancements could be inherently attributed to the SSL paradigm itself. Nonetheless, these experimental outcomes offer insightful implications: SSL may represent a more promising paradigm under scenarios with limited annotation budgets.

\begin{figure}[htbp]
    \centering
    \includegraphics[width=1\linewidth]{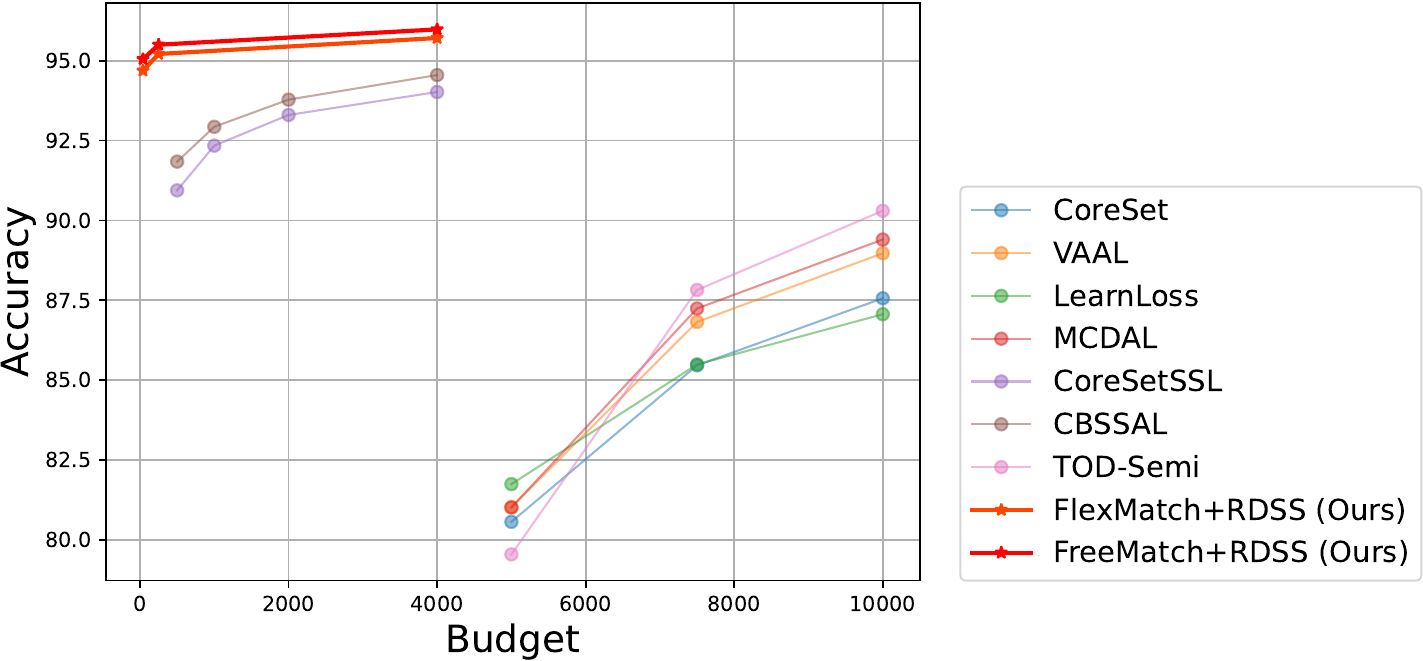}
    \caption{Comparison with AL/SSAL approaches on CIFAR-10.}
    \label{cwasmoc10}
\end{figure}

\begin{figure}[htbp]
    \centering
    \includegraphics[width=1\linewidth]{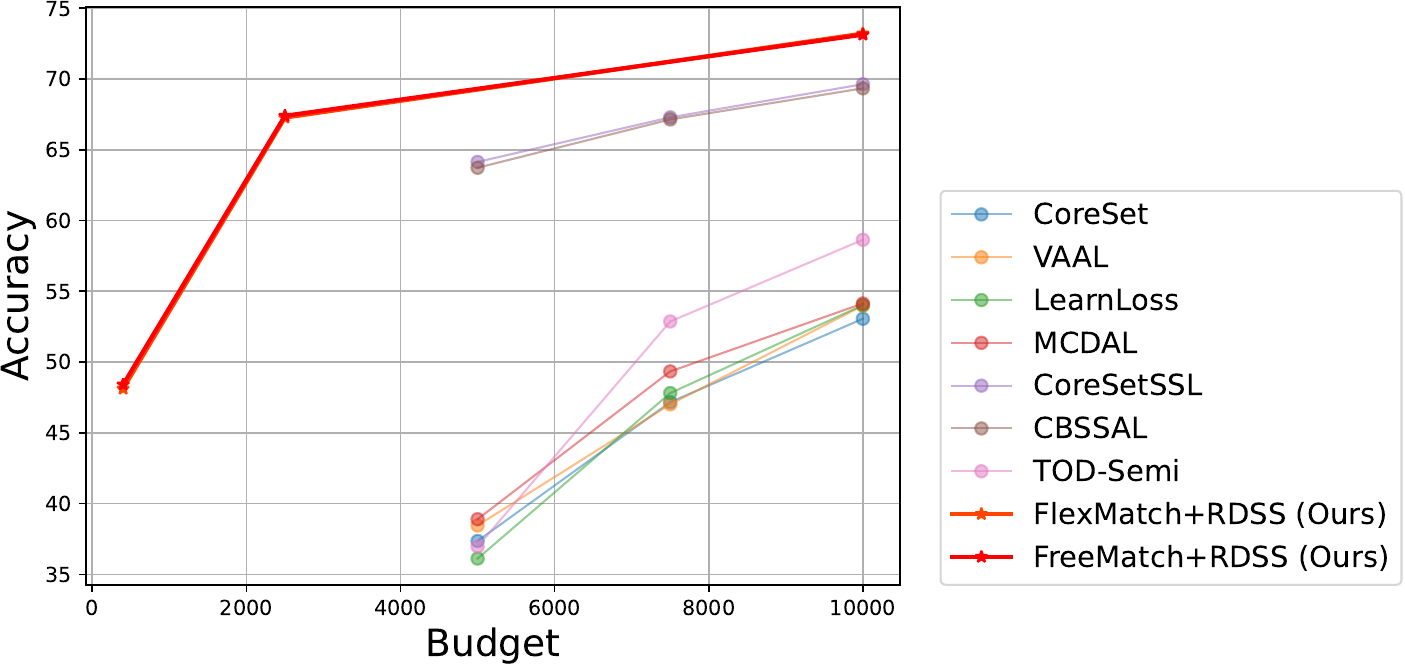}
    \caption{Comparison with AL/SSAL approaches on CIFAR-100.}
    \label{cwasmoc100}
\end{figure}

\section{Limitation}
\label{limit}

The choice of $\alpha$ depends on the number of full unlabeled data points, independent of the information on the shape of data distribution. This may lead to a loss of effectiveness of RDSS on those datasets with complicated distribution structures. However, it outperforms fixed-ratio approaches on the datasets under different budget settings.

\section{Potential Societal Impact}
\label{psi}
\textbf{Positive societal impact.}
Our method ensures the representativeness and diversity of the selected samples and significantly improves the performance of SSL methods, especially under low-budget settings. This reduces the cost and time of data annotation and is particularly beneficial for resource-constrained research and development environments, such as medical image analysis.

\textbf{Negative societal impact.}
When selecting representative data for analysis and annotation, the processing of sensitive data may be involved, increasing the risk of data leakage, especially in sensitive fields such as medical care and finance. It is worth noting that most algorithms applied in these sensitive areas are subject to this risk.

\end{document}